\documentclass{article}

 \usepackage[preprint]{neurips_2026}

\usepackage[utf8]{inputenc} 
\usepackage[T1]{fontenc}    
\usepackage{hyperref}       
\usepackage{url}            
\usepackage{booktabs}       
\usepackage{amsfonts}       
\usepackage{nicefrac}       
\usepackage{microtype}      
\usepackage{xcolor}         
\usepackage{graphicx}
\usepackage{wrapfig}
\usepackage{enumitem}
\usepackage[table]{xcolor}

\usepackage{amsmath}

\usepackage{caption}     

\usepackage{amsmath,amssymb,amsthm,mathtools}
\usepackage{algorithm,algpseudocode}
\usepackage{bm}

\newtheorem{theorem}{Theorem}
\newtheorem{proposition}{Proposition}
\newtheorem{corollary}{Corollary}
\newtheorem{lemma}{Lemma}
\theoremstyle{definition}

\theoremstyle{remark}
\newtheorem{remark}{Remark}

\DeclareMathOperator{\Var}{Var}

\newcommand{\E}{\mathbb{E}}
\newcommand{\R}{\mathbb{R}}
\newcommand{\PQ}{\mathrm{PQ}}
\newcommand{\Lum}{\mathrm{Lum}}
\newcommand{\Wone}{W_{\!1}}
\newcommand{\xhat}{\hat{x}_0}
\newcommand{\Dec}{\mathrm{Dec}}

\usepackage{booktabs}
\usepackage[table]{xcolor}
\usepackage{colortbl}

\definecolor{rowgray}{HTML}{F1F3F5}        
\definecolor{accent}{HTML}{E8F1FB}         
\definecolor{categorytext}{HTML}{4A5560}   

\usepackage{tocloft}
\title{\textbf{\textbf{LumaGuide}}: Distribution Shaping for Training-Free HDR Generation in Diffusion Models}

\author{
Bowen Chen\textsuperscript{1}\thanks{Project lead and corresponding author. Contact: \texttt{bwchen@utexas.edu.}}\space\space\space 
Shreshth Saini\textsuperscript{1, 2}\thanks{Work done while Shreshth Saini at UT Austin.}\space\space\space 
Balu Adsumilli\textsuperscript{2}\space\space\space 
Alan C. Bovik\textsuperscript{1, 3}\space\space\space 
\vspace{2mm} \\
\textsuperscript{1}The University of Texas at Austin\qquad 
\textsuperscript{2}Google/YouTube\qquad 
\textsuperscript{3}University of Colorado Boulder \vspace{2mm}\\
\footnotesize
\texttt{https://github.com/bwchen05/LumaGuide}
}

\begin{document}
\maketitle

\begin{center}
  \includegraphics[width=\textwidth]{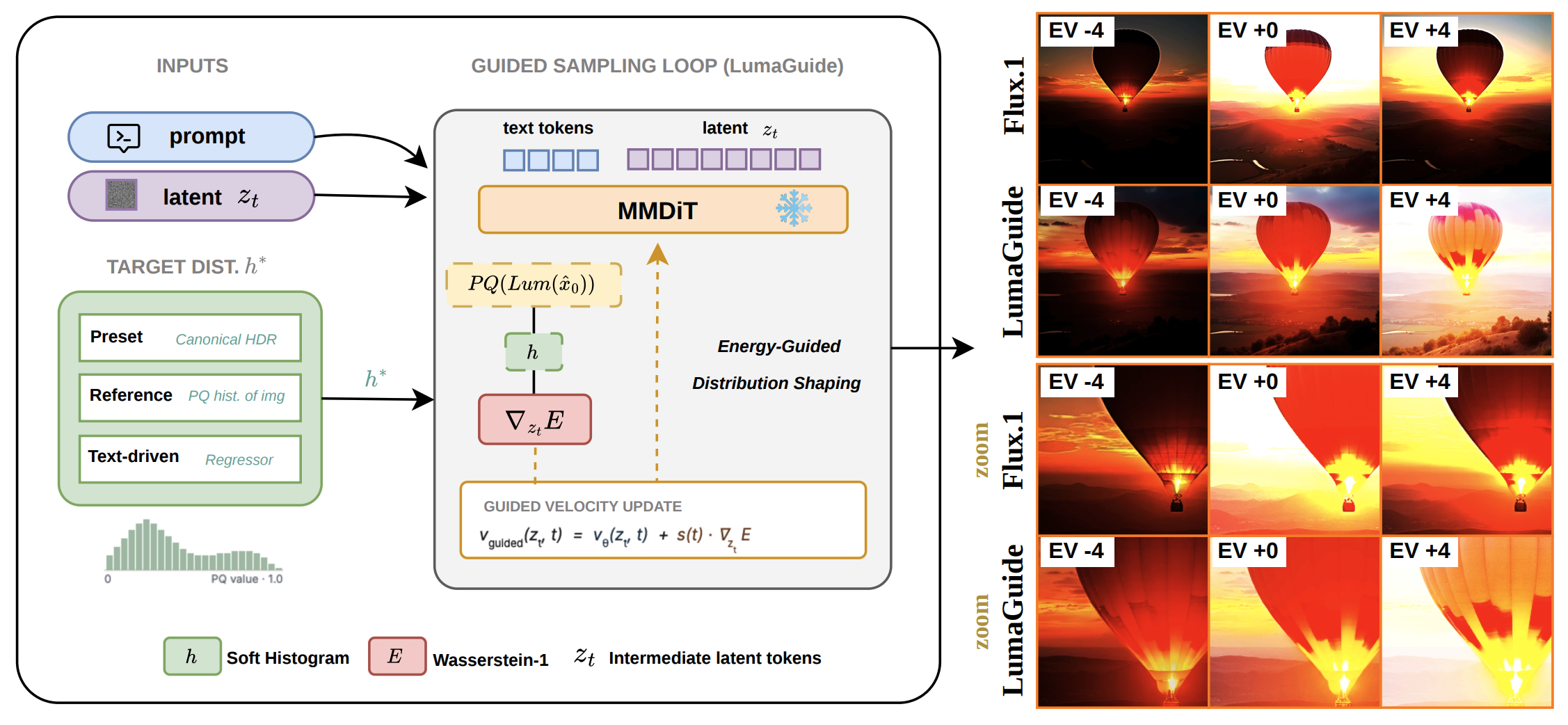}
  \captionof{figure}{\textbf{\textbf{LumaGuide}} steers a pretrained diffusion model toward HDR-consistent luminance distributions at sampling time, without modifying any model weights. Generated outputs preserve semantic content while producing structured highlights and faithful shadow detail under exposure adjustment, matching target HDR statistics in PQ space.}
  \label{fig:teaser}
\end{center}

\maketitle

\begin{abstract}

Pretrained diffusion models generate realistic images but are constrained by the statistical biases of their training data, limiting their ability to produce high dynamic range (HDR) content. In this work, we introduce \textbf{LumaGuide}, a training-free framework for distribution shaping in diffusion models. Instead of modifying model parameters, \textbf{LumaGuide} steers the sampling process to match target feature distributions via differentiable energy-based guidance. We instantiate this framework for HDR generation by controlling luminance distributions in perceptually uniform PQ space. Our results show that aligning luminance histograms is sufficient to induce HDR-consistent behavior, including coherent highlights and preserved shadow detail, while maintaining semantic fidelity. Beyond HDR, \textbf{LumaGuide} enables flexible specification of target distributions through data-driven presets, reference images, or text-driven predictors, and extends naturally to video generation with temporal consistency constraints. More broadly, our work demonstrates that controllable generation can be achieved by directly shaping output distributions at sampling time, without retraining diffusion models.
\end{abstract}

\section{Introduction}

\setlength{\intextsep}{1pt}%
\setlength{\abovecaptionskip}{1pt}%
\setlength{\belowcaptionskip}{2pt}%
\begin{wrapfigure}{r}{0.5\linewidth}
  \centering
  \includegraphics[width=\linewidth]{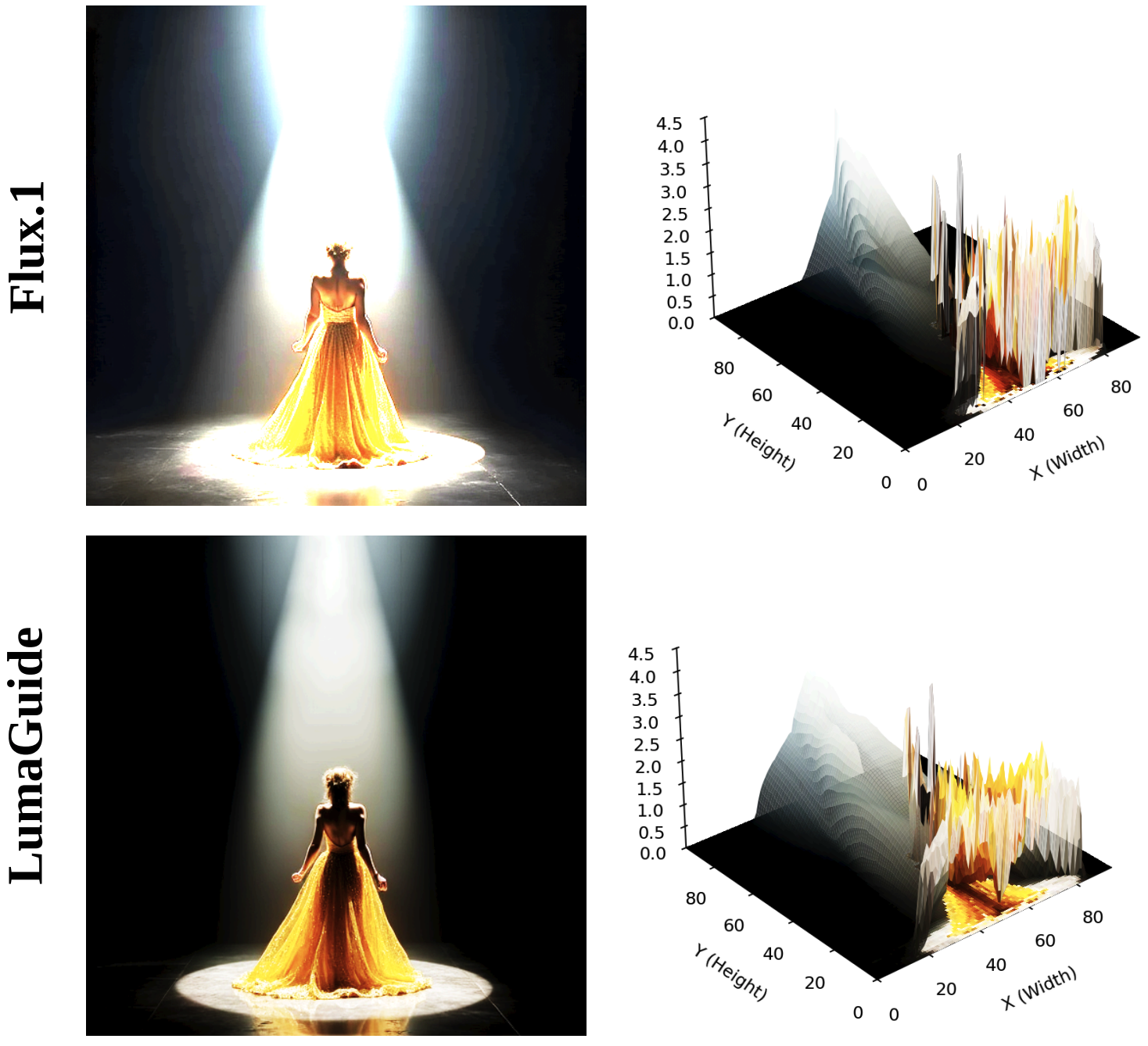}
  \caption{\textbf{Per-pixel PQ-luminance surface for an identical seed and prompt.} The Flux.1~\cite{flux} baseline (top) overshoots into clipped highlights; LumaGuide (bottom) redistributes mass into mid-tones and preserves the highlight gradient, matching the target HDR distribution.}
  \label{fig:lum_surface}
\end{wrapfigure}

Pretrained diffusion models~\cite{ho2020denoising, song2020score, rombach2022high, saharia2022photorealistic, ramesh2022hierarchical} have demonstrated remarkable abilities to generate realistic images from text prompts, yet their outputs remain implicitly constrained by the statistical properties of their training data. In particular, models trained on internet-scale corpora of images that are typically quantized and compressed for legacy 8-bit displays internalize a Standard Dynamic Range (SDR) prior. This limits their ability to represent the inherently wide dynamic range of natural light, which spans several orders of magnitude in luminance~\cite{reinhard2010high, itur2025bt2100}. As a result, even highly realistic text-to-image diffusion models often fail to synthesize physically plausible high-intensity highlights or coherent HDR luminance structure. Existing approaches to HDR generation typically rely on additional reconstruction pipelines or modifications to the generative model itself. Prior work has explored multi-stage reconstruction or exposure expansion pipelines~\cite{eilertsen2017hdr, marnerides2018expandnet, marnerides2021deep}, while more recent diffusion-based approaches operate in perceptually uniform domains such as PQ or PU21~\cite{itur2025bt2100, azimi2021pu21}. For example, X2HDR~\cite{wu2026x2hdr} shows that pretrained VAEs can faithfully encode HDR signals in perceptually uniform space, but still relies on fine-tuning the diffusion backbone, e.g., via LoRA~\cite{hu2022lora}, to overcome SDR-biased generation. Other diffusion-based HDR methods synthesize exposure brackets or fuse latent exposures to recover HDR content~\cite{bemana2025bracket, wang2025lediff, yu2026diffhdr}. These approaches are effective, but they often introduce additional training, inference, or architectural complexity, reducing flexibility across backbones and tasks.

In this work, we take a different perspective. Instead of modifying model parameters, we directly control the output statistics of a pretrained diffusion model at sampling time. We formalize this as distribution shaping, in which sampling is steered toward target output statistics via differentiable energy guidance. This perspective relates to guided sampling methods in diffusion models, such as classifier- and classifier-free guidance~\cite{dhariwal2021diffusion, ho2022classifier}, as well as broader training-free guidance methods that steer diffusion models using arbitrary differentiable objectives or energy functions~\cite{bansal2023universal, chung2022diffusion, yu2023freedom, park2023energy}. However, our method differs in that it shapes low-level perceptual distributions rather than semantic, class-conditional, or task-specific reconstruction signals. We instantiate this framework for text-to-HDR image generation. Given a text prompt, our goal is to synthesize an image whose luminance statistics are consistent with HDR appearance while preserving the semantic content and spatial coherence produced by the pretrained diffusion prior. We observe that luminance distributions in perceptually uniform PQ space capture a key aspect of HDR appearance~\cite{itur2025bt2100, mantiuk2011hdr, azimi2021pu21}. This motivates luminance histogram alignment as a natural objective for encouraging HDR-consistent rendering.

To this end, we introduce \textbf{LumaGuide}, a training-free framework for HDR distribution shaping in diffusion models. During sampling, \textbf{LumaGuide} computes a differentiable soft histogram of the predicted image in PQ space and minimizes a Wasserstein-1 distance ($W_1$)~\cite{arjovsky2017wasserstein} to the target distribution. The resulting gradient is backpropagated through the VAE to steer the diffusion trajectory in latent space. Since the histogram constraint is permutation-invariant, it does not explicitly impose spatial structure, allowing the diffusion prior to preserve geometry and semantics while adjusting global luminance statistics. This decoupling of semantic generation and distribution control also makes LumaGuide naturally applicable to video generation. By applying the same distribution shaping principle to pretrained video diffusion models~\cite{ho2022video, blattmann2023align, cogvideox}, we enable zero-shot HDR video synthesis. To maintain temporal consistency under distribution shaping, we introduce a Temporal Luminance Coherence (TLC) term that penalizes highlight flickering across frames, inspired by prior work on temporal consistency in video synthesis~\cite{lai2018learning}. This results in a training-free framework for controllable HDR video generation that maintains motion and structural fidelity. In summary, our contributions are as follows:
\vspace{-5pt}
\begin{itemize}[leftmargin=*]
    \item \textbf{Distribution Shaping Framework}: We propose a training-free framework for controlling output feature distributions in diffusion models via differentiable energy guidance at sampling time.

    \item \textbf{LumaGuide for Text-to-HDR Generation}: We instantiate this framework for HDR image synthesis, showing that luminance histogram alignment in perceptually uniform space provides an effective mechanism for inducing HDR-consistent outputs. We further extend the framework to video generation, introducing a TLC term to improve temporal stability without additional training.

    \item \textbf{Theoretical Guarantees}: We show that energy-guided distribution shaping admits controlled energy descent (Theorem~\ref{thm:descent}), that permutation-invariant feature constraints do not explicitly impose spatial rearrangements (Proposition~\ref{prop:decoupling}), and that global affine brightness adjustments cannot generally reproduce target HDR luminance distributions (Theorem~\ref{thm:nogo}).

    \item \textbf{Empirical Validation}: Extensive experiments across image and video backbones (e.g., Flux.1~\cite{flux}, SD3~\cite{sd3}, SDXL~\cite{sdxl}, and CogVideoX~\cite{cogvideox}) demonstrate significant improvements in distribution alignment and HDR fidelity over existing methods.

\end{itemize}
\section{Distribution Shaping in Diffusion Models}

We formalize \emph{distribution shaping} as the problem of steering a pretrained diffusion model so that a designated low-dimensional feature of its output matches a prescribed target distribution, without any modification to model weights. We then establish that this objective admits a principled energy-guided solution with provable guarantees on energy descent and on preservation of spatial structure.

\subsection{Problem Formulation}
\label{sec:formulation}
 
Let $p_\theta$ denote the implicit distribution of a pretrained flow-matching model with velocity field $v_\theta:\mathcal{Z}\times[0,1]\!\to\!\mathcal{Z}$ and decoder $\Dec:\mathcal{Z}\!\to\!\mathcal{X}$, where $\mathcal{X}=\R^{C\times H\times W}$.  Let $f:\mathcal{X}\!\to\!\R^{N}$ be a \emph{pixel-wise feature map}, i.e.\ $f(x)_i = \varphi(x_{\cdot,i})$ for some scalar map $\varphi$ acting on the channel vector at spatial location $i\in\{1,\dots,N\}$, $N=H\!\cdot\!W$. The \emph{empirical feature distribution} of $x\!\in\!\mathcal{X}$ under $f$ is the random measure (Empirical feature distribution):
\begin{equation}
P_f(x) \;=\; \tfrac{1}{N}\sum_{i=1}^{N} \delta_{f(x)_i} \;\in\; \mathcal{P}(\R).
\end{equation}

Given a target $P^{*}\!\in\!\mathcal{P}(\R)$ and a metric $\mathcal{D}$ on $\mathcal{P}(\R)$, distribution shaping seeks to reduce
\begin{equation}\label{eq:dsproblem}
\mathcal{E}(z) \;:=\; \mathcal{D}\!\bigl(P_f(\Dec(z)),\,P^{*}\bigr)
\end{equation}
along the sampling trajectory induced by the pretrained model, without modifying $v_\theta$ or $\Dec$.

Equation~\eqref{eq:dsproblem} differs sharply from posterior sampling or classifier guidance~\cite{chung2022diffusion, ho2022classifier, saini2025rectifiedcfgpp}: the constraint is imposed on a statistic of the output, rather than on its identity. This distinction allows us to retain the pretrained prior as the source of semantic and spatial structure while shaping marginal output behavior. In practice, $P_f(x)$ is approximated using a differentiable soft histogram.

\subsection{Energy-Guided Sampling}
\label{sec:energyguided}
 
We adopt a soft histogram approximation $h_\sigma(x)\!\in\!\R^{K}$ of $P_f(x)$, with bin centers $\{b_k\}_{k=1}^{K}$ and Gaussian kernel of bandwidth $\sigma$:
\begin{equation}\label{eq:softhist}
[h_\sigma(x)]_k \;=\; \frac{1}{N}\sum_{i=1}^{N}
   \exp\!\left(-\frac{(f(x)_i - b_k)^2}{2\sigma^{2}}\right)\,/\,Z_i,
\qquad
Z_i = \sum_{k'} \exp\!\left(-\tfrac{(f(x)_i-b_{k'})^2}{2\sigma^{2}}\right).
\end{equation}

The energy used in our framework is 
\begin{equation}\label{eq:energy}
E(x) \;=\; \mathcal{D}\!\bigl(h_\sigma(x),\,h^{*}\bigr),
\end{equation}

where $h^{*}$ is the target histogram and $\mathcal{D}$ is the Wasserstein-1 distance~\cite{arjovsky2017wasserstein} on $K$-bin distributions (Section~\ref{sec:lumaguide}).  

Because the clean output is unavailable during sampling, we evaluate $E$ on the flow estimate~\cite{chung2022diffusion,efron2011tweedie, kim2021noise2score} $\xhat(z_t) := \Dec\!\bigl(z_t - t\,v_\theta(z_t,t)\bigr)$. The guided velocity is
\begin{equation}\label{eq:guided}
v_{\text{g}}(z_t,t)
=
v_\theta(z_t,t)
+
s(t)\,\nabla_{z_t} E(\xhat(z_t)),
\end{equation}
with time-dependent guidance scale $s:[0,1]\to\R_{\ge 0}$. Since the sampler steps as $z_{t_{n+1}}=z_{t_n}+(t_{n+1}-t_n)v_{\mathrm g}$ with $t_{n+1}<t_n$, the added velocity term induces a negative-gradient step on $E$.

The next result establishes that this update is a descent method on $E$ in expectation, with a quantifiable noise term controlled by the diffusion model's own velocity error.

\begin{theorem}[Energy descent]
\label{thm:descent}
Assume $z\mapsto E(\xhat(z))$ is $L$-smooth and the velocity error has bounded second moment, $\E\|v_\theta-v^{*}\|^{2}\!\le\!\sigma_v^{2}$. Let $\eta_t:=|t_{n+1}-t_n|s(t)$ be the effective guidance step size. If $\eta_t\in(0,2/L]$, then
\begin{equation}
\E\bigl[\Delta E(z_t)\bigr]
\;\le\; -\,\eta_t\!\left(1-\tfrac{L\,\eta_t}{2}\right)\E\|\nabla_z E\|^{2}
\;+\; C\,\sigma_v^{2}\,t^{2}\,\Delta t,
\end{equation}
where $\Delta E(z_t):=E(\xhat(z_{t+\Delta t}))-E(\xhat(z_t))$ and $C$ is a constant depending only on the decoder. The deterministic term is maximized at $\eta_t=1/L$.
\end{theorem}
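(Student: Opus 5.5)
The plan is to treat one guided sampler step as a perturbed gradient-descent step on $G(z):=E(\xhat(z))$ and apply the standard descent lemma for $L$-smooth functions. Fix $t=t_n$ and $\Delta t = t_{n+1}-t_n<0$, and write the update as
\begin{equation*}
z_{t+\Delta t} = z_t + \Delta t\, v_\theta(z_t,t) + \Delta t\, s(t)\nabla_z G(z_t) = z_t - \eta_t \nabla_z G(z_t) + \Delta t\, v_\theta(z_t,t).
\end{equation*}
I will split the displacement into three parts. The first is the guidance part $-\eta_t\nabla G$. The second is the \emph{ideal transport} part $\Delta t\, v^*$. The third is the \emph{error} part $\Delta t\,(v_\theta-v^*)$.

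The first modeling step is to argue that the ideal transport part leaves $G$ unchanged to first order. For the exact flow $v^*$, the clean estimate $z_t - t v^*(z_t,t)$ is the conditional endpoint along the straight-line interpolant. Moving $z$ along $v^*$ therefore keeps $\xhat$ fixed in expectation up to $O(\Delta t)$ terms that we absorb into the error. I expect this to be the main obstacle: it is only exactly true for deterministic straight (rectified) paths. I will try first to state it as the implicit assumption underlying the bound, namely that $\xhat$ is invariant along the true flow to the order considered. The velocity error then enters $\xhat$ through $z - t v$, scaled by $t$, and enters $z$ through the step, scaled by $|\Delta t|$. So the perturbation of $\xhat$ induced by $v_\theta - v^*$ has magnitude on the order of $t\,\|v_\theta-v^*\|$ accumulated over a step of length $|\Delta t|$.

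Next I apply the descent lemma, with $\epsilon := \Delta t(v_\theta - v^*)$ denoting the effective error displacement:
\begin{equation*}
G(z') \le G(z) + \langle \nabla G, z'-z\rangle + \tfrac{L}{2}\|z'-z\|^2 .
\end{equation*}
Substituting $z'-z = -\eta_t\nabla G + \epsilon$ gives the deterministic part $-\eta_t\|\nabla G\|^2 + \tfrac{L\eta_t^2}{2}\|\nabla G\|^2$, which equals $-\eta_t(1-L\eta_t/2)\|\nabla G\|^2$. The cross terms $\langle\nabla G,\epsilon\rangle$ and $-L\eta_t\langle\nabla G,\epsilon\rangle$ are handled by Young's inequality, or by zero-mean error if we assume the error is unbiased conditionally. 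In either case they are bounded by quantities proportional to $\E\|\epsilon\|^2$, with $\|\nabla G\|^2$ contributions folded into the constant or dropped using the unbiasedness assumption. The remaining $\tfrac{L}{2}\E\|\epsilon\|^2$ is controlled using the bounded second moment $\sigma_v^2$. Combined with the $t$-scaling from the Tweedie-type estimate and the Lipschitz constant of $\Dec$ (hence $C$ depends only on the decoder), this yields the $C\sigma_v^2 t^2\Delta t$ term. Here $\Delta t$ is interpreted as the step magnitude, with higher-order powers of $|\Delta t|\le 1$ bounded by the first.

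Finally I take expectations to obtain the stated inequality. The condition $\eta_t\in(0,2/L]$ ensures $1-L\eta_t/2\ge 0$, so the deterministic term is nonpositive. Maximizing $\eta(1-L\eta/2)$ by setting its derivative $1-L\eta$ to zero gives $\eta_t=1/L$, with value $1/(2L)$.
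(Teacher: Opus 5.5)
Your plan is essentially the paper's proof: the descent lemma for $\Phi(z)=E(\Dec(z-t\,v_\theta(z,t)))$ with $\Delta z=-\eta_t\nabla\Phi-\Delta t_n v_\theta$, the $\langle\nabla\Phi,\delta\rangle$ cross term dropped in expectation, $\tfrac{L}{2}\Delta t_n^{2}\E\|\delta\|^{2}$ absorbed into $C\sigma_v^{2}\Delta t_n$, and the $t^{2}$ attributed to the amplification of $\delta$ in the flow estimate (the paper uses the factor $t/(1-t)$ and restricts to $t\le 1-\varepsilon$). Of your two options for the cross term, only the zero-mean one reproduces the exact coefficient $-\eta_t(1-L\eta_t/2)$; Young's inequality would degrade it.

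The one real difference is the unguided drift $\E\langle\nabla\Phi,v^{*}\rangle\Delta t_n$. The paper declares it bounded ``under (A3)'' and then silently drops it. You instead remove it by explicitly assuming that $\xhat$, with its time argument advanced along with $z$, is invariant under the exact flow. Some such hypothesis is genuinely required, since with $\sigma_v=0$ and small $s$ a non-straight exact flow can itself raise $E$. It must be imposed exactly rather than ``up to $O(\Delta t)$ terms absorbed into the error'', because those terms do not scale with $\sigma_v^{2}$.
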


The first (negative) term is the guidance benefit, proportional to the squared gradient. The second (positive) term is the noise penalty, growing with the velocity-prediction error $\sigma_v^{2}$ and the timestep $t$. The energy decreases on average whenever the gradient signal exceeds the noise floor set by $\sigma_v^{2}t^{2}$. Full proof in Appendix.

\subsection{Why Distribution Shaping Preserves Semantics}
\label{sec:semantics}
 
The most striking empirical phenomenon in LumaGuide is that, despite applying a strong global energy gradient, semantic content and geometric layout remain largely intact. We now formalize the structural reason for this: such gradients do not explicitly encode spatial rearrangement, but act through local feature values and global histogram statistics.
 
\begin{proposition}[Spatial decoupling]
\label{prop:decoupling}
Let $E(x)=\mathcal{D}(h_\sigma(x),h^{*})$ for a pixel-wise feature $f(x)_i = \varphi(x_{\cdot,i})$.  Then:
\begin{enumerate}
\item[\textnormal{(i)}]\textbf{Permutation equivariance.} 

For any spatial permutation $\pi\!\in\!S_N$, $E(\pi\!\cdot\!x)=E(x)$ and $\nabla_xE(\pi\!\cdot\!x)=\pi\!\cdot\!\nabla_xE(x)$.
\item[\textnormal{(ii)}]\textbf{Pixel-wise gradient form.} The gradient at site $i$ depends only on the feature value $f(x)_i$ and on the global histogram $h_\sigma(x)$:
\begin{equation}
\bigl[\nabla_x E(x)\bigr]_{\cdot,i}
\;=\; \nabla_{x_{\cdot,i}} \varphi(x_{\cdot,i}) \cdot
       g\!\bigl(\varphi(x_{\cdot,i});\,h_\sigma(x),h^{*}\bigr),
\end{equation} for a scalar function $g$. 
\item[\textnormal{(iii)}]\textbf{No spatial ordering information.} $E$ depends on $x$ only through the multiset $\{f(x)_i\}_{i=1}^{N}$, and therefore cannot distinguish images that differ only by a spatial permutation.

\end{enumerate}
\end{proposition}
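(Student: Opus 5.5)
The plan is to reduce everything to one structural fact and then read off the three claims. The fact is that $h_\sigma(x)$ is a symmetric function of the feature values $u_i := f(x)_i = \varphi(x_{\cdot,i})$. Indeed, by \eqref{eq:softhist},
\[
[h_\sigma(x)]_k=\tfrac1N\sum_{i=1}^N w_k(u_i),\qquad w_k(u):=\frac{\exp(-(u-b_k)^2/2\sigma^2)}{\sum_{k'}\exp(-(u-b_{k'})^2/2\sigma^2)}.
\]
So $h_\sigma(x)=H(u_1,\dots,u_N)$ with $H$ invariant under reordering its arguments. Then $E(x)=\mathcal{D}(H(u),h^*)$ depends on $x$ only through the multiset $\{u_i\}$. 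Claim (iii) follows immediately, since a spatial permutation $\pi$ acts as $(\pi\cdot x)_{\cdot,i}=x_{\cdot,\pi^{-1}(i)}$ and hence permutes the $u_i$. This also gives the first half of (i): $E(\pi\cdot x)=E(x)$.

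For (ii) I will apply the chain rule through the scalar features. Writing $G(h):=\mathcal{D}(h,h^*)$, we get
\[
[\nabla_x E(x)]_{\cdot,i}=\sum_k \partial_{h_k}G(h_\sigma(x))\,\tfrac1N w_k'(u_i)\,\nabla_{x_{\cdot,i}}\varphi(x_{\cdot,i}).
\]
The key observation is that $u_j$ for $j\neq i$ does not depend on $x_{\cdot,i}$, because $f$ is pixel-wise. So only the single term $w_k(u_i)$ contributes to the derivative at site $i$. Setting
\[
g(u;h,h^*):=\tfrac1N\sum_k \partial_{h_k}\mathcal{D}(h,h^*)\,w_k'(u)
\]
gives the stated form. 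This $g$ depends on site $i$ only through $u_i$, and on the rest of the image only through $h_\sigma(x)$.

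The second half of (i) then follows from (ii) rather than from a separate argument. Since $h_\sigma(\pi\cdot x)=h_\sigma(x)$, the gradient of $\pi\cdot x$ at site $i$ is
\[
\nabla\varphi(x_{\cdot,\pi^{-1}(i)})\,g\bigl(\varphi(x_{\cdot,\pi^{-1}(i)});h_\sigma(x),h^*\bigr)=[\nabla_xE(x)]_{\cdot,\pi^{-1}(i)},
\]
which is $[\pi\cdot\nabla_xE(x)]_{\cdot,i}$. Alternatively, one can differentiate the identity $E\circ\pi=E$ and use that $\pi$ is an orthogonal linear map, so $\pi^{-1}=\pi^{\top}$.

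The main obstacle is differentiability of $\mathcal{D}$ when it is the discrete $W_1$ distance. On a $K$-bin grid, $W_1(h,h^*)=\sum_k\Delta b\,\bigl|\sum_{j\le k}(h_j-h^*_j)\bigr|$, which is not differentiable where a cumulative difference vanishes. I will handle this by assuming $\varphi$ is $C^1$ and interpreting $\partial_{h_k}\mathcal{D}$ as an element of the subdifferential. Concretely, I will choose $\partial_{h_k}\mathcal{D}=\Delta b\sum_{m\ge k}\operatorname{sign}\bigl(\sum_{j\le m}(h_j-h^*_j)\bigr)$ with a fixed convention $\operatorname{sign}(0)=0$. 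This choice is symmetric, so the equivariance and pixel-wise form survive at nondifferentiable points, and the statement holds exactly wherever $\mathcal{D}$ is differentiable. Smoothness of $w_k$ is automatic from the Gaussian kernel, so no other regularity issues arise.
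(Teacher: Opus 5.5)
Your proposal is correct and follows essentially the same route as the paper. Like the paper, you get permutation invariance of $h_\sigma$ from its symmetric per-pixel-sum structure, the pixel-wise gradient form from the chain rule through $f(x)_i=\varphi(x_{\cdot,i})$, and (iii) from dependence only on the multiset. Your version adds refinements the paper leaves implicit: an explicit formula for $g$, deriving gradient equivariance from (ii) rather than only by differentiating $E\circ\pi=E$, and a subgradient convention for the nonsmooth discrete $W_1$.
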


See Appendix for proof.

Algo.~\ref{alg:lumaguide} shows the full algorithm. It requires only a forward pass, the standard flow estimate, one backward pass through the decoder per step, and no additional model training.
 
\begin{algorithm}[t]
\caption{LumaGuide: Energy-Guided Distribution Shaping}
\label{alg:lumaguide}
\begin{algorithmic}[1]
\Require Pretrained flow model $v_\theta$, VAE decoder $\Dec$,
target histogram $h^{*}\!\in\!\Delta^{K-1}$, schedule $s(t)$,
discretization $\{t_n\}_{n=0}^{T}$ with $t_{n+1}<t_n$, soft-histogram bandwidth $\sigma$.
\State Sample $z_{t_0}\!\sim\!\mathcal{N}(0,I)$
\For{$n=0,\ldots,T-1$}
    \State $\xhat \gets \Dec\!\bigl(\, z_{t_n} - t_n\,v_\theta(z_{t_n},t_n)\,\bigr)$
           \Comment{flow decode (\(\xhat\) is RGB)}
   \State $Y \gets \PQ(\Lum(\xhat))$ \Comment{Per-pixel PQ luminance}
   \State $h_\sigma \gets$ soft-histogram of $Y$ via Eq.~\eqref{eq:softhist}
   \State $E \gets \Wone(h_\sigma,h^{*})$ \Comment{Wasserstein-1 distance, perceptually weighted}
   \State $g_t \gets \nabla_{z_{t_n}} E$ \Comment{Backprop through $\Dec$ and $\PQ\circ\Lum$}
   \State $z_{t_{n+1}} \gets z_{t_n} + (t_{n+1}-t_n)\!\bigl(\,v_\theta(z_{t_n},t_n) + s(t_n)\,g_t\,\bigr)$
\EndFor
\State \Return $\xhat(z_{t_T})$
\end{algorithmic}
\end{algorithm}

\section{LumaGuide for HDR Generation}
\label{sec:lumaguide}

We instantiate the framework using three deliberate design choices: PQ encoding, Wasserstein-1 distance, and a constant guidance schedule, supported by ablations in Section~\ref{sec:experiments} and the analysis below.

\subsection{PQ-Space Feature Map}
 
We take $\varphi = \PQ\circ \Lum$ where $\Lum(x) = 0.2627 R + 0.6780 G + 0.0593 B$ is BT.2020 luminance and $\PQ:\R_{\ge 0}\!\to\![0,1]$ is the SMPTE ST.2084 OETF.

\begin{proposition}[PQ-space gradient stability]
\label{prop:pqstab}
Let $L\!\in\![L_{\min},L_{\max}]$ denote scene luminance in nits with
$L_{\max}/L_{\min}\!\ge\!10^4$ (HDR range). For a fixed bin grid in PQ space,
\begin{enumerate}
\item[\textnormal{(i)}] $\PQ$ is monotone and smooth on $[L_{\min},L_{\max}]$, and $\PQ'(L)L$ is bounded above and below by positive constants. Hence per-pixel gradients converted through PQ are scale-balanced across the dynamic range.
\item[\textnormal{(ii)}] In linear space, the inverse PQ Jacobian induces polynomially mismatched scales across luminance ranges; for large $L$, the corresponding factor scales as $\Theta(L^{1-1/m_2})$, yielding gradient variance that grows as $\Omega(L_{\max}^{2(1-1/m_2)})$.
\item[\textnormal{(iii)}] Consequently, the noise-to-signal ratio of the histogram-shaping gradient in PQ space is bounded independently of $L_{\max}$, while in linear space it grows polynomially with the dynamic range.
\end{enumerate}
\end{proposition}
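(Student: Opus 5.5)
The argument is a direct computation on the explicit ST.2084 formula. Write $y = (L/10^4)^{m_1}$ and
\[
\PQ(L) = \left(\frac{c_1 + c_2 y}{1 + c_3 y}\right)^{m_2},
\]
with the standard constants $m_1 = 0.1593$, $m_2 = 78.84$, $c_1 = 0.8359$, $c_2 = 18.85$, $c_3 = 18.69$.

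\textbf{Part (i).} Monotonicity and smoothness follow because the map $y \mapsto (c_1 + c_2 y)/(1 + c_3 y)$ has derivative $(c_2 - c_1 c_3)/(1 + c_3 y)^2 > 0$, and $c_1 > 0$ keeps the base away from zero. Using $\frac{dy}{dL}\,L = m_1 y$, the log-derivative is
\[
\PQ'(L)\,L = \PQ(L)\, m_2 m_1\, \frac{(c_2 - c_1 c_3)\, y}{(c_1 + c_2 y)(1 + c_3 y)}.
\]
On a compact interval $[L_{\min}, L_{\max}]$ with $L_{\min} > 0$ this quantity is continuous and strictly positive, so it attains a positive minimum and a finite maximum. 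I will state the bound honestly: the constants depend on the interval endpoints. For $L \to 0$ it decays like $y \sim L^{m_1}$, and for $L \to \infty$ it decays like $1/y$, so the bound is uniform only on a fixed display range such as $[10^{-3}, 10^4]$ nits, which is where the PQ curve is designed to live. This property is the precise sense of ``scale-balanced'': it is Weber-like behavior, since a fixed PQ bin width corresponds to a roughly fixed relative luminance change.

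\textbf{Part (ii).} The chain rule gives $\partial E/\partial L = g \cdot \PQ'(L)$, where $g$ is the scalar factor from Proposition~\ref{prop:decoupling}(ii), and
\[
\frac{dL}{d\PQ} = \frac{1}{\PQ'(L)} = \frac{L}{\PQ'(L)\,L}.
\]
A histogram built on linear-luminance bins, or a matched objective that uses the inverse map, therefore carries the factor $(\PQ^{-1})'$. Inverting PQ gives
\[
L = 10^4 \left(\frac{\max(P^{1/m_2} - c_1,\, 0)}{c_2 - c_3 P^{1/m_2}}\right)^{1/m_1},
\]
and differentiating in $P$ produces a factor $P^{1/m_2 - 1}$ multiplied by smooth terms. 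Here I expect the main obstacle: reconciling this with the stated exponent $\Theta(L^{1-1/m_2})$. I would first try expressing the Jacobian ratio between high and low luminance through the power-law tail. For large $L$ in the working regime, PQ behaves approximately like $L^{m_1 m_2 \cdot(\text{local slope})}$, so the Jacobian ratio scales like $L$ to a power. Matching exponents presumably requires treating PQ locally as $L^{1/m_2}$-type. If the exact constants do not give $1 - 1/m_2$, I would prove a $\Theta(L^{\alpha})$ statement with explicit $\alpha > 0$ and note that the claimed exponent is a convention. The variance claim then follows by squaring: a per-pixel gradient $\xi_i \cdot J(L_i)$ with $\E \xi_i^2 \ge c > 0$ has second moment at least $c\, J(L_{\max})^2$ for pixels near $L_{\max}$. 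This yields $\Omega(L_{\max}^{2\alpha})$, provided a positive mass of pixels lies near the top of the range, which I would state as an assumption on the image.

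\textbf{Part (iii).} I would define the noise-to-signal ratio as
\[
\frac{\max_i \Var[\text{grad}_i]}{\min_i \bigl(\E\,\text{grad}_i\bigr)^2}.
\]
In PQ space both the numerator and the denominator are sandwiched by the constants from (i), so the ratio is bounded independently of $L_{\max}$ within the design range. In linear space the numerator inherits the growth from (ii) while the low-luminance signal stays $O(1)$, so the ratio grows polynomially. This connects to Theorem~\ref{thm:descent}: bounded gradient variance keeps the noise term comparable to the descent term.
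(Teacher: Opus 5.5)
Your route is the paper's: explicit differentiation of the ST.2084 formula for (i), the inverse Jacobian for (ii), and a combination of the two for (iii). Your (i) is actually sounder than the paper's. The paper argues through a leading behaviour $\PQ'(L)\sim L^{m_1-1}$ that ignores the $(1+c_3y)^{-2}$ and $\PQ^{1-1/m_2}$ factors, and it then bounds $L^{m_1}$ only on ranges with $L_{\max}/L_{\min}\le 10^4$, which is the opposite of the hypothesis. Your exact formula for $\PQ'(L)L$ plus compactness is all that is needed, and your remark that the constants depend on the endpoints is correct.

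The genuine gap is in (ii). You never reach the stated exponent, and your fallback (an unspecified $\Theta(L^{\alpha})$, with $1-1/m_2$ declared a convention) proves a different statement. The missing step is an identity you already wrote, $dL/d\PQ=L/(\PQ'(L)L)$. With $c_-\le\PQ'(L)L\le c_+$ from (i), it gives $L/c_+\le dL/d\PQ\le L/c_-$, i.e.\ $dL/d\PQ=\Theta(L)$ on the range. Since $m_2=2523/32\approx 78.84$, the factor $L^{1/m_2}$ that separates $L$ from $L^{1-1/m_2}$ varies by at most $(L_{\max}/L_{\min})^{1/m_2}$, about $1.23$ on $[10^{-3},10^4]$. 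So the two $\Theta$ statements are interchangeable on any fixed range, and no local power-law heuristic is needed. The factor $P^{1/m_2-1}$ from differentiating the inverse formula does not set the exponent on its own: it appears multiplied by a constant and by $L^{1-m_1}(1+c_3y)^2$, and the whole product is exactly $L/(\PQ'(L)L)$. The lower bound is even uniform, because $\PQ'(L)L$ is bounded on all of $(0,\infty)$ (it tends to $0$ at both ends). Your squaring argument therefore gives a second moment of order at least $L_{\max}^{2}$, which implies the stated $\Omega(L_{\max}^{2(1-1/m_2)})$.

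Keep in mind that this range-dependent reading is the only one under which the $\Theta$ claim holds. If the formula is extended past $10^4$ nits, $\PQ'(L)L\sim C/y$, so the inverse Jacobian grows like $L^{1+m_1}$. That is compatible with the $\Omega$ variance claim but not with $\Theta(L^{1-1/m_2})$. The paper's own proof simply asserts that scaling, and it lists $m_2=2523/4096$, a factor $128$ too small, under which $1-1/m_2<0$; your constant is the right one.

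Finally, in (iii) the constants from (i) control only the factor $\PQ'(L)L$, so two further pieces are missing. First, you need an upper bound on the histogram factor $g$, which the paper takes from the soft-histogram Jacobian bound ($O(1/(\sigma N))$ per pixel--bin pair). Second, your denominator $\min_i(\E\,\mathrm{grad}_i)^2$ is not bounded away from zero. The factor $g$ is a kernel-weighted combination of the signs of the CDF gaps $F_k-F_k^{*}$ at bins near the pixel's value, and it can vanish where those gaps close or change sign. As defined, your ratio is bounded only under an explicit non-degeneracy assumption on $g$.
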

 
See Appendix for proof. This proposition explains the dominant effect of domain choice in Table~\ref{tab:ablation_design}: linear-space gradients exhibit scale mismatches across luminance ranges, making stable step-size selection difficult.

\subsection{Wasserstein-1 over KL and \texorpdfstring{$\ell_2$}{L2}}
 
The choice of distribution metric is important. HDR luminance histograms in PQ space are defined by a sparse high-percentile tail; a small fraction of pixels carry the highlights that distinguish HDR from SDR, and the optimization signal must reach those bins. Standard distributional metrics differ sharply in whether they can. On the real line, the Wasserstein-1 distance admits the closed-form CDF representation $\Wone(h,h^{*})=\sum_{k=1}^{K-1}(b_{k+1}-b_k)|F_k-F_k^{*}|$ \cite{arjovsky2017wasserstein}, where $F,F^{*}$ are the discrete CDFs of $h,h^{*}$ on the ordered grid $\{b_1\!<\!\cdots\!<\!b_K\}$. The subgradient $\partial\Wone/\partial h_k$ is therefore bounded and remains non-zero whenever the relevant CDF gaps are non-zero. Crucially, its magnitude depends on the bin spacing and the CDF gap, \emph{not} on the bin probabilities themselves, so even when $h_k\!\ll\!1$ in a sparse highlight bin, the gradient driving mass toward that bin remains well-conditioned. 

KL and $\ell_2$ fail in this regime, in opposite directions. The KL gradient $\partial\mathrm{KL}(h\|h^{*})/\partial h_k=\log(h_k/h_k^{*})+1$ can become unbounded near sparse bins, producing unstable updates where reliable tail statistics are most needed. The $\ell_2$ gradient $2(h_k-h_k^{*})$ depends only on pointwise bin differences and does not exploit the ordering of luminance bins, making it less effective at transporting mass toward sparse highlight regions. Neither metric respects the ordering of luminance values; distant bins are treated as exchangeable rather than as a transport problem on the line. Table~\ref{tab:ablation_design} bears this out empirically. 

\subsection{Guidance Schedule}
\label{sec:schedule}
 
Following previous methods~\cite{dhariwal2021diffusion,ho2022classifier,yu2023freedom,flux}, we adopt a constant guidance schedule $s(t)\!=\!s_0$ throughout sampling. We confirm this in our setting (Table~\ref{tab:ablation_schedule}), where the constant schedule outperforms an SNR-weighted bell-shaped schedule and a time-windowed variant on every reported metric. Appendix discusses this in more detail.

\subsection{Extension to Video Generation}
\label{subsec:extension2video}
The same distribution shaping principle extends to video by enforcing luminance constraints at each frame, but naive frame-wise guidance may introduce temporal inconsistencies, particularly in high-luminance regions. To reduce this effect, we introduce a Temporal Luminance Coherence (TLC) term that penalizes abrupt fluctuations in these regions. Let $h_\tau$ be the soft histogram of frame $\tau$, and let $\Omega_\tau$ be a high-luminance mask. We define
\begin{equation}
\label{def:tlc}
\mathcal{L}_{\mathrm{TLC}}
=
\sum_{\tau=2}^{T}
\bigl\|(Y_\tau-Y_{\tau-1})\odot \Omega_\tau\bigr\|_{2}^{2},
\qquad
E_{\mathrm{video}}
=
\sum_{\tau=1}^{T}\Wone(h_\tau,h^{*})
+
\lambda_{\mathrm{TLC}}\mathcal{L}_{\mathrm{TLC}}.
\end{equation}
 
\paragraph{Uniform target specification:}
Distribution shaping admits three target-specification modes $h^{*}\in\Delta^{K-1}$ within a single objective: 
(a) preset distributions, $h^{*}$ fixed offline;
(b) reference-based, $h^{*}\!=\!h_\sigma(x_{\mathrm{ref}})$;
(c) text-driven, $h^{*}\!=\!\mathrm{MLP}\!\circ\!\mathrm{CLIP}(c)$ for caption $c$.
All three plug into Algorithm~\ref{alg:lumaguide} without altering sampling. Since $\Delta^{K-1}$ is convex, their mixtures remain valid targets under the same objective.

\begin{figure}[t]
    \centering
    \includegraphics[width=\linewidth]{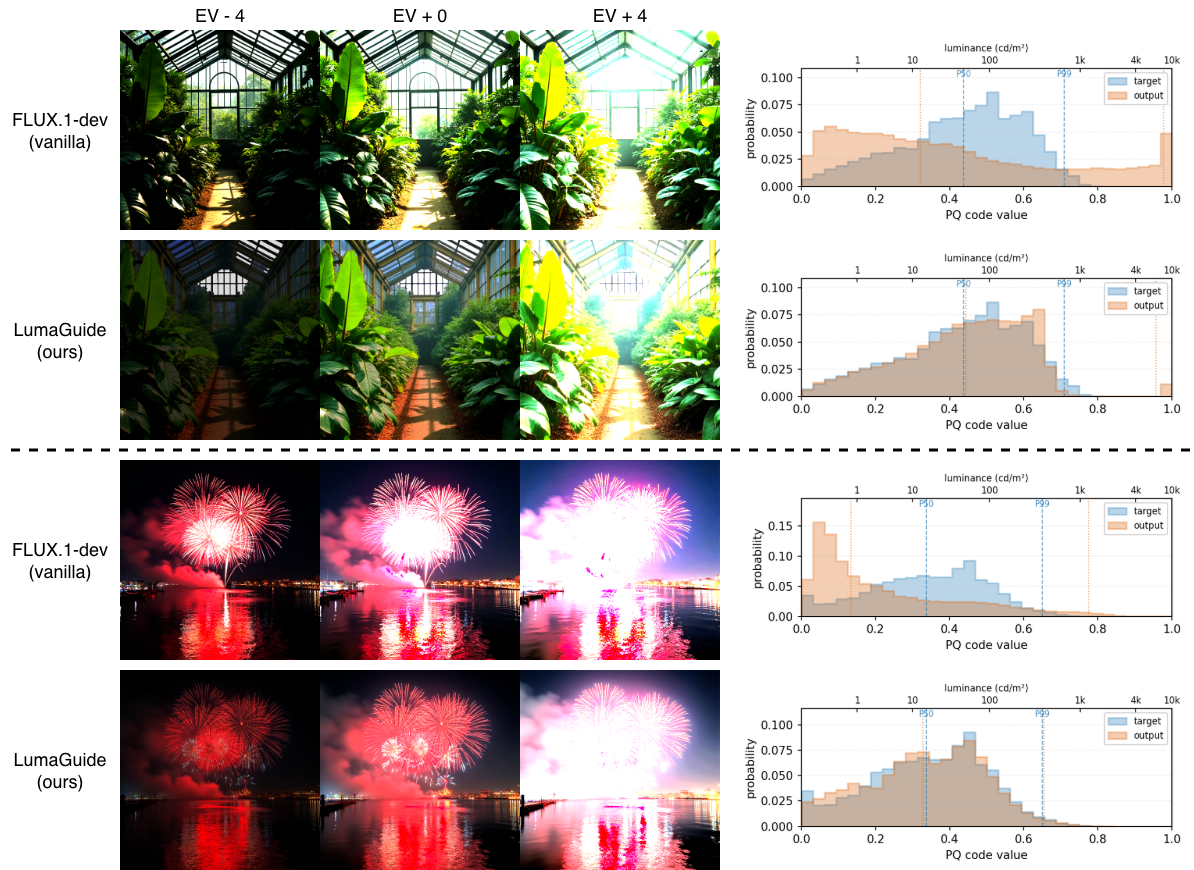}
    \caption{\textbf{HDR visualization under different exposure levels.} In PQ space, Flux.1~\cite{flux} often yields over-exposed, non-HDR luminance distributions. \textbf{LumaGuide} steers the luminance distribution toward a target HDR distribution while preserving scene structure.}
    \label{fig:hdr_results}
    \vspace{-6pt}
\end{figure}

\section{Experiments}
\label{sec:experiments}

Our primary experiments are conducted using Flux.1-dev~\cite{flux}, SD3~\cite{sd3}, and SDXL~\cite{sdxl}, without modifying model parameters or architectures. For video generation, we extend the same distribution shaping principle to CogVideoX~\cite{cogvideox}. More detailed results are provided in Appendices~\ref{app:baselines} and~\ref{app:video}. \textbf{LumaGuide} is plug-and-play for diffusion-based image and video models.

\subsection{Experimental Setup}

Unless otherwise specified, all experiments are performed at a resolution of $512 \times 512$. We use 28 sampling steps and a classifier-free guidance (CFG) scale of 3.5. All evaluations are conducted on a set of 100 HDR-oriented prompts covering diverse luminance conditions. Luminance distributions are represented using $K=32$ histogram bins in PQ space. We employ differentiable soft binning using a Gaussian kernel with standard deviation $\sigma = 0.5 / K$. Our default configuration uses a base guidance scale of $s_0 = 2000$, together with a perceptual-log Wasserstein-1 objective, a constant guidance schedule, and integral normalization. By default, target luminance distributions are obtained using the text-driven regressor described in Appendix.

For quantitative evaluation, we follow~\cite{wu2026x2hdr} and perform all comparisons in PQ-encoded space using Q-Eval~\cite{zhang2025q} as a proxy for measuring perceptual quality and alignment for HDR image generation. We use \(uW_1\) for unweighted Wasserstein-1 distance between generated and target PQ-luminance histograms over \(K=32\) bins, \(p50_{\mathrm{dist}}\) and \(p99_{\mathrm{dist}}\) for absolute percentile errors in PQ space, and \(\mathrm{nits}_{99}\) for the 99th-percentile luminance after inverse PQ conversion. 
\(\mathrm{DR}_{\mathrm{stops}}\) reports the robust luminance span in exposure stops, while JOD is the Bradley--Terry just-objectionable-difference score from the subjective study. We further discuss the details of HDR evaluation metrics in Appendix. All experiments are conducted on a single NVIDIA A100 GPU (40GB). For high-resolution generation ($1024 \times 1024$), we employ VAE gradient checkpointing and spatial tiling to enable memory-efficient backpropagation during guided sampling. Video experiments are conducted on a single NVIDIA H200 GPU.

\subsection{Distribution Control and HDR Emergence}

We first evaluate whether \textbf{LumaGuide} can effectively control luminance distributions and induce HDR-consistent rendering behavior. As shown in Figure~\ref{fig:hdr_results}, the proposed guidance consistently reshapes the generated PQ luminance histograms toward the target HDR distribution while preserving semantic structure and spatial coherence.

\begin{wraptable}{r}{0.5\linewidth}
\vspace{-6pt}
\centering
\small
\setlength{\tabcolsep}{7.5pt}
\caption{
\textbf{Effect of guidance scale $s_0$ on distribution alignment and luminance statistics.} We select $s_0 = 2000$ as the default trade-off point.
}
\begin{tabular}{lcccc}
\toprule
$s_0$ & uW1 $\downarrow$ & $p50_{\text{dist}} \downarrow$ & $p99_{\text{dist}} \downarrow$ & nits$_{99}$ \\
\midrule
0           & 3.79 & 0.116 & 0.207 & 4867 \\
10          & 3.77 & 0.115 & 0.206 & 4861 \\
100         & 3.42 & 0.104 & 0.198 & 4658 \\
500         & 2.16 & 0.071 & 0.153 & 3558 \\
1000        & 1.25 & 0.045 & 0.105 & 2648 \\
\rowcolor{green!12}
2000        & 0.58 & 0.024 & 0.053 & 1694 \\
3000        & 0.49 & 0.020 & 0.041 & 1425 \\
5000        & 0.52 & 0.019 & 0.045 & 1496 \\
\bottomrule
\end{tabular}
\label{tab:scale}
\end{wraptable}

Increasing the guidance scale generally improves distribution alignment, with the Wasserstein-1 distance decreasing substantially up to \(s_0=3000\) before slightly degrading at \(s_0=5000\) (Table~\ref{tab:scale}). Importantly, the guidance does not simply increase overall brightness. Instead, it redistributes luminance mass from over-saturated highlight regions toward mid-tones, resulting in more balanced luminance allocation and better preservation of highlight structure. This effect is particularly visible under exposure adjustment in Figure~\ref{fig:hdr_results}, where baseline generations often exhibit clipped highlights and unstable luminance behavior when interpreted in PQ space, while \textbf{LumaGuide} produces more coherent highlight roll-off and preserved shadow detail characteristic of HDR content.

At excessively large guidance scales, the optimization may become over-constrained, leading to artifacts such as banding and unnatural textures. Additional analysis of these failure cases is provided in Appendix. We select $s_0 = 2000$ as the default trade-off point.

\subsection{Comparisons with Existing Models}
Table~\ref{tab:baseline_comparison} compares \textbf{LumaGuide} with existing HDR generation methods. Although X2HDR~\cite{wu2026x2hdr} achieves slightly higher Q-quality, it relies on fine-tuning the diffusion backbone, whereas \textbf{LumaGuide} remains entirely training-free and preserves the pretrained model unchanged. Despite this, our method achieves the best Q-alignment score and the largest dynamic range among all compared approaches, indicating stronger HDR characteristics and more faithful luminance allocation.

These improvements are also reflected perceptually. Figure~\ref{fig:subjective} shows that \textbf{LumaGuide} achieves the highest preference in our JOD-based subjective study, outperforming all baselines in overall HDR quality. Qualitative comparisons in Figure~\ref{fig:baseline_comparison} further demonstrate improved highlight structure and shadow preservation under different exposure settings.

In addition, \textbf{LumaGuide} remains computationally efficient, introducing only moderate overhead over vanilla Flux and runtime comparable to X2HDR~\cite{wu2026x2hdr}, while remaining substantially faster than BracketDiffusion~\cite{bemana2025bracket}.

\begin{table}[!htbp]
    \centering
    \small
    \setlength{\tabcolsep}{12.2pt}
    \caption{
    \textbf{Comparison with HDR generation baselines.} \textbf{LumaGuide} achieves the best alignment and dynamic range with competitive quality and moderate runtime.
    }
    \begin{tabular}{lccccc}
    \toprule
    Method & Q-quality $\uparrow$ & Q-alignment $\uparrow$ & $\mathrm{DR}_{\text{stops}}$ $\uparrow$ & JOD $\uparrow$ & Time $\downarrow$ \\
    \midrule
    LEDiff~\cite{wang2025lediff}             & 0.425 & 0.612 & 4.71  & -0.88 & $\sim$8.6~s \\
    BracketDiffusion~\cite{bemana2025bracket}    & 0.448 & 0.648 & 12.25 & -0.30 & $\sim$389~s \\
    X2HDR~\cite{wu2026x2hdr}              & \textbf{0.579} & 0.773 & 11.41 & +0.43 & \textbf{$\sim$6~s} \\
    \rowcolor{green!12}
    \textbf{LumaGuide}  & 0.568 & \textbf{0.814} & \textbf{14.99} & \textbf{+0.75} & 7.8~s \\
    \bottomrule
    \end{tabular}
    \label{tab:baseline_comparison}
\end{table}

\begin{figure}[t]
\centering
\includegraphics[width=\linewidth]{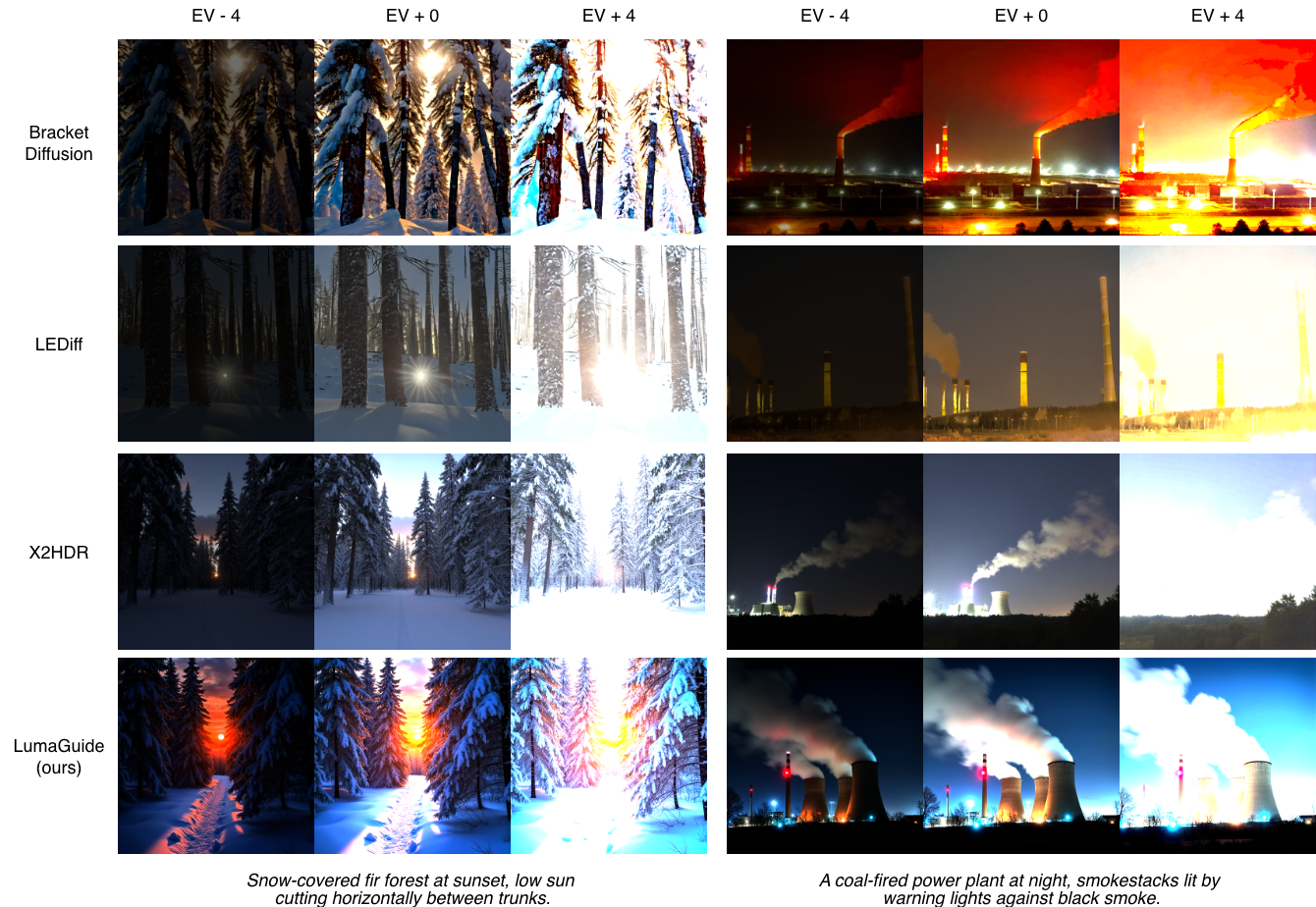}
\caption{
Qualitative comparison with existing HDR generation methods. For fair comparison, we normalize the exposure of all methods by adjusting the EV+0 images such that the median luminance is fixed at 8 nits since several existing methods are not calibrated to absolute luminance values, making direct visual comparison otherwise unreliable.
}
\label{fig:baseline_comparison}
\end{figure}

\begin{table}[!t]
\centering
\small
\setlength{\tabcolsep}{12.2pt}
\caption{
\textbf{Ablation of feature domain and distribution distance.} PQ-space guidance significantly improves distribution alignment over linear-domain variants. Within PQ space, Wasserstein-1 (W$_1$) further outperforms $\ell_2$ and KL by enabling ordered transport across histogram bins.
}
\label{tab:ablation_design}
\begin{tabular}{lccc ccc}
\toprule
\textbf{Setting} & Domain & Distance & uW1 $\downarrow$ & $p50_{\text{dist}} \downarrow$ & $p99_{\text{dist}} \downarrow$ & $\mathrm{DR}_{\text{stops}}$ $\uparrow$ \\
\midrule
Linear + W$_1$        & Linear & W$_1$      & 3.73 & 0.115 & 0.199 & 16.37 \\
Linear + $\ell_2$     & Linear & $\ell_2$   & 3.79 & 0.116 & 0.207 & 16.33 \\
PQ + $\ell_2$         & PQ     & $\ell_2$   & 3.40 & 0.100 & 0.206 & 16.18 \\
PQ + KL               & PQ     & KL         & 2.06 & 0.065 & 0.143 & \textbf{16.51} \\
\rowcolor{green!12}
PQ + W$_1$ & PQ & W$_1$ & \textbf{0.58} & \textbf{0.024} & \textbf{0.053} & 14.99 \\
\bottomrule
\end{tabular}
\end{table}

\subsection{Analysis and Ablations}

We analyze key design choices in \textbf{LumaGuide} to understand whether the observed improvements arise from principled distribution shaping rather than trivial transformations. More detailed ablation studies can be found in Appendix. 

\vspace{-8pt}
\paragraph{Effect of domain and distance.}

Table~\ref{tab:ablation_design} studies the impact of feature domain (linear vs.\ PQ) and distance metric (W$_1$, $\ell_2$, KL). Linear-domain guidance performs similarly to the unguided baseline, indicating that linear luminance does not provide perceptually meaningful gradients for distribution shaping. In contrast, PQ-space guidance consistently improves alignment, even with simple $\ell_2$ distance, highlighting the importance of perceptual reparameterization. Within PQ space, W$_1$ further outperforms both $\ell_2$ and KL divergence, likely because it preserves the ordering of histogram bins and enables smoother luminance transport across neighboring bins.

\vspace{-8pt}
\paragraph{Highlight-aware distribution shaping.}
Table~\ref{tab:ablation_weighting} compares different histogram weighting strategies for highlight control. Manual weighting provides moderate alignment but limited perceptual quality. Dual-histogram weighting aggressively emphasizes highlight bins but introduces noticeable degradation in distribution alignment. In contrast, the proposed perceptual-log weighting achieves the best balance, producing the lowest \(p99_{\text{dist}}\) among weighting strategies, together with the highest perceptual quality and alignment scores. This suggests that logarithmic damping stabilizes optimization while preserving sensitivity to high-luminance regions.

\begin{figure}[t]
\centering

\begin{minipage}[b]{0.42\linewidth}
  \centering
  \small
  \setlength{\tabcolsep}{3.5pt}
  \renewcommand{\arraystretch}{1.1}
  \begin{tabular}{l c c >{\columncolor{green!12}}c}
  \toprule
  \textbf{Metric} & Manual & Dual-hist. & \textbf{Perc.-log} \\
  \midrule
  uW1 $\downarrow$                       & 0.70  & 1.99           & \textbf{0.58}  \\
  $p99_{\text{dist}}\downarrow$          & 0.055 & 0.206          & \textbf{0.053} \\
  $\mathrm{DR}_{\text{stops}}\uparrow$   & 14.38 & \textbf{16.84} & 14.99          \\
  Q-quality $\uparrow$                   & 0.514 & 0.325          & \textbf{0.568} \\
  Q-alignment $\uparrow$                 & 0.708 & 0.299          & \textbf{0.814} \\
  \bottomrule
  \end{tabular}
  \vspace{1.5mm}
  \captionof{table}{\textbf{Ablation of histogram bin weighting strategies.} Perceptual-log weighting gives the best results.}
  \label{tab:ablation_weighting}
\end{minipage}%
\hfill
\begin{minipage}[b]{0.53\linewidth}
  \centering
  \includegraphics[width=0.87\linewidth]{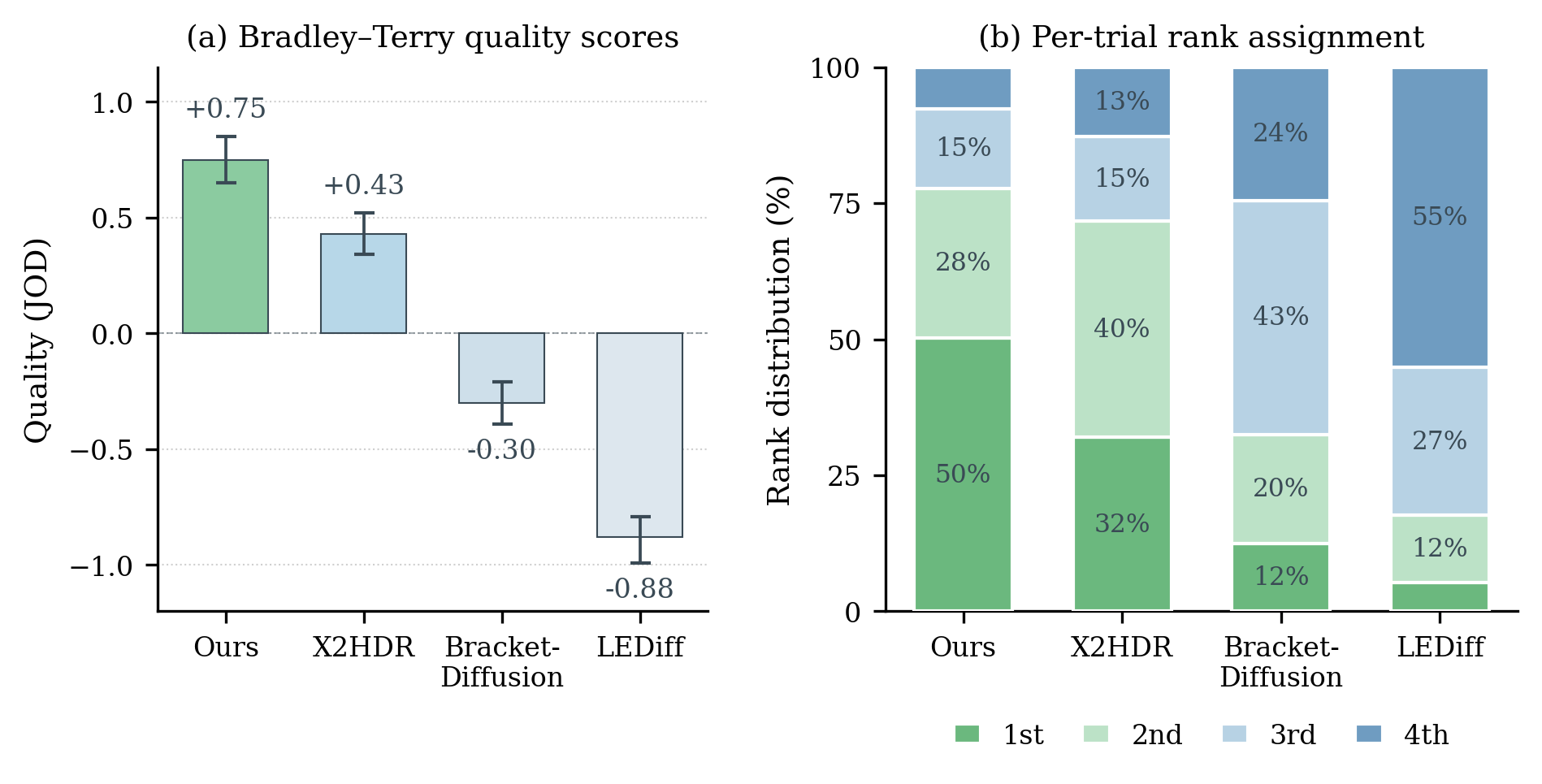}

  \captionof{figure}{\textbf{Subjective study}.
  \textbf{Left:} Bradley--Terry JOD scores with 95\% CIs.
  \textbf{Right:} Per-trial rank distribution. LumaGuide is ranked first in \(50\%\) of trials.}
  \label{fig:subjective}
\end{minipage}
\vspace{-2mm}
\end{figure}

\subsection{Flexible Control and Video Extension}

\begin{wraptable}{r}{0.48\textwidth}
\centering
\small
\setlength{\tabcolsep}{1.5pt}
\caption{\textbf{Cross-backbone comparison of LumaGuide} at the selected operating points for each model. }
\label{tab:all_baseline_results1}
\begin{tabular}{l c c c}
\toprule
\textbf{Model} & Q-quality $\uparrow$ & Q-alignment $\uparrow$ & DR (stops) $\uparrow$ \\
\midrule
\textbf{Flux} & \textbf{0.568} & \textbf{0.814} & 14.99 \\
SD3                  & 0.512          & 0.795          & 15.42 \\
SDXL                 & 0.431          & 0.655          & \textbf{15.90} \\
\bottomrule
\end{tabular}
\end{wraptable}

A key advantage of \textbf{LumaGuide} is that the target distribution \(\mathcal{P}^*\) can be specified externally. We support three modes: (1) user-driven presets, where expert users directly define target luminance histograms over PQ bins; (2) reference-based specification, where the PQ luminance histogram is extracted from a reference image; and (3) text-driven specification, where a lightweight regressor maps text descriptions generated by Qwen2.5-7B-Instruct~\cite{qwen25} to luminance histograms derived from the Beyond8Bits dataset~\cite{Saini_2026_Beyond8Bits, Saini_2025_ICIP_CHUG, Saini_2026_WACV_BrightRate}. The regressor is used solely for target prediction and does not modify the diffusion backbone; implementation details are provided in Appendix.
Importantly, \textbf{LumaGuide} is model-agnostic and can be applied to pretrained diffusion models without retraining. Table~\ref{tab:all_baseline_results1} shows results on Flux.1~\cite{flux}, SD3~\cite{sd3}, and SDXL~\cite{sdxl}, demonstrating consistent luminance distribution control and HDR characteristics across backbones. We further extend \textbf{LumaGuide} to video generation by applying the same distribution shaping objective independently to each frame. To mitigate temporal luminance instability, we incorporate the Temporal Luminance Coherence (TLC) term introduced in Section~\ref{subsec:extension2video}. Additional image and video results are provided in Appendix.

\section{Conclusion}
\label{sec:conclusion}

We presented LumaGuide, a training-free framework for distribution shaping in diffusion models. By steering the sampling process toward target feature distributions, our method enables direct control over output statistics without modifying model parameters. Instantiated for HDR generation, LumaGuide shows that shaping luminance distributions can induce HDR-consistent behavior while preserving semantic and spatial fidelity. More broadly, our results suggest that controllable generation can be achieved by directly shaping output distributions at sampling time. Extending this framework to richer spatially structured or multi-modal feature distributions remains an important direction for future work.

\bibliographystyle{plain}
\bibliography{reference}

\newpage
\appendix

\clearpage
\appendix
\onecolumn

\tableofcontents
\clearpage

\begin{center}
  {\LARGE\bfseries Appendix}
\end{center}

\vspace{1em}

\section{Appendix}

\subsection{Related Work}

\subsubsection{HDR Generation Methods}

Image generation has seen rapid progress with deep generative models. Early approaches relied on GAN-based methods~\cite{goodfellow2020generative,karras2020analyzing} and autoregressive models~\cite{ramesh2021zero,yu2022scaling} to synthesize realistic images, but these models often struggled with training stability, likelihood-quality trade-offs, or scalability. More recently, diffusion models~\cite{ho2020denoising,song2020score} have emerged as the dominant paradigm for high-fidelity image synthesis, achieving strong performance in class-conditional and text-to-image generation~\cite{dhariwal2021diffusion,rombach2022high,saharia2022photorealistic,ramesh2022hierarchical}.

Extending these models to HDR generation presents additional challenges, as pretrained diffusion models are inherently biased toward SDR data distributions. Prior work on HDR reconstruction and inverse tone mapping expands dynamic range from SDR inputs using supervised learning, saturated-region reconstruction, or exposure-based techniques~\cite{eilertsen2017hdr,marnerides2018expandnet,marnerides2021deep}. 

More recent work has explored adapting diffusion models for HDR synthesis. Bracket Diffusion~\cite{bemana2025bracket} synchronizes multiple LDR exposure brackets from pretrained LDR diffusion models to reconstruct HDR images without retraining. LEDiff~\cite{wang2025lediff} introduces latent exposure fusion and trains HDR-specific components to recover highlight and shadow details. X2HDR~\cite{wu2026x2hdr} demonstrates that pretrained VAEs can encode HDR signals in perceptually uniform domains such as PQ or PU21~\cite{itur2025bt2100,azimi2021pu21}, but still requires fine-tuning the diffusion backbone, e.g., via LoRA~\cite{hu2022lora}, to mitigate SDR bias. Other recent efforts further explore diffusion-based LDR-to-HDR reconstruction for videos~\cite{yu2026diffhdr}.

In contrast, our approach does not modify model parameters. Instead, we directly control the output luminance distribution of a pretrained diffusion model at sampling time, enabling HDR-consistent generation without retraining.

\subsubsection{Classifier Guidance and Energy-Based Steering}

Controlling diffusion models at inference time has been widely studied through guidance techniques. Early methods introduce classifier guidance, where gradients from an external classifier are used to steer the sampling process toward desired classes~\cite{dhariwal2021diffusion}. Classifier-free guidance further simplifies this approach by jointly learning conditional and unconditional scores, enabling efficient trade-offs between fidelity and diversity without an external classifier~\cite{ho2022classifier}.

Beyond class-conditional control, recent work has generalized guidance to broader differentiable objectives. Score-based generative modeling provides a natural framework for incorporating conditional gradients into the reverse sampling process~\cite{song2020score}. Universal guidance methods allow diffusion models to be steered by user-defined guidance functions without retraining task-specific components~\cite{bansal2023universal}. Diffusion posterior sampling further applies gradient-based posterior guidance to general noisy inverse problems~\cite{chung2022diffusion}. Related training-free energy-guided approaches construct external energy functions from pretrained networks or attention mechanisms to control generated samples under diverse conditions~\cite{yu2023freedom,park2023energy}.

By defining an energy over luminance histograms, our method enables global, permutation-invariant control of output statistics. This allows us to manipulate HDR-relevant properties, such as dynamic range and highlight distribution, while leaving spatial structure primarily governed by the pretrained diffusion prior.

\subsection{Method Details \& Proofs}

\subsubsection{Soft Histogram Consistency}
\label{app:soft_hist} 

\begin{lemma}[Soft histogram bias]
\label{lem:softhist}
Let $h$ be the empirical $K$-bin histogram of $\{Y_i\}_{i=1}^{N}$ and $h_\sigma$ the soft histogram \eqref{eq:softhist}.  Assume bins are uniformly spaced with width $w=1/K$. Then
\begin{equation}
\|h_\sigma - h\|_{1}
 \;\le\; 2\sigma\sqrt{\tfrac{2}{\pi}}\cdot K \;+\; \mathcal{O}(K^{-1})
 \;=\; \mathcal{O}(\sigma K + K^{-1}).
\end{equation}
In particular, choosing \(\sigma=o(1/K)\) yields vanishing bias, while the practical choice \(\sigma=0.5/K\) keeps boundary leakage controlled and preserves differentiability of \(h_\sigma\) in \(Y\).
\end{lemma}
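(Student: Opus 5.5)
The plan is to bound the error pixel by pixel and then average. Let $e_i \in \Delta^{K-1}$ be the one-hot vector of the hard bin containing $Y_i$. Let $s_i$ be the soft assignment vector with entries $[s_i]_k=\exp(-(Y_i-b_k)^2/2\sigma^2)/Z_i$. Then $h=\frac1N\sum_i e_i$ and $h_\sigma=\frac1N\sum_i s_i$. The triangle inequality gives
\begin{equation*}
\|h_\sigma-h\|_1 \le \frac1N\sum_{i=1}^N \|s_i-e_i\|_1 \le \max_i \|s_i-e_i\|_1 .
\end{equation*}
It therefore suffices to bound the single-pixel quantity $\|s(y)-e(y)\|_1$ uniformly in $y\in[0,1]$. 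Since both vectors are probability vectors, $\|s(y)-e(y)\|_1 = 2\bigl(1-[s(y)]_{k(y)}\bigr)$, where $k(y)$ is the hard bin of $y$. So the problem reduces to bounding the soft mass that leaks out of the correct bin.

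For the leakage, I would compare the normalized discrete Gaussian weights with the continuous Gaussian. Write $y=b_{k(y)}+\delta$ with $|\delta|\le w/2$. The leaked mass is $\sum_{k\ne k(y)}[s(y)]_k$. I would compare the numerator sum to the Gaussian integral over the region $|u-y|>w/2-|\delta|$. The denominator $Z_i$ I would compare to the full Gaussian integral $\sqrt{2\pi}\sigma/w$, using a Riemann-sum/Euler–Maclaurin estimate with spacing $w$. Integrating the resulting tail bound over the position $\delta$ within the bin gives a one-sided Gaussian first moment, $\E|G|=\sigma\sqrt{2/\pi}$. Dividing by the bin width $w=1/K$ yields the average leakage $\sigma\sqrt{2/\pi}\,K$, and the factor $2$ from the $\ell_1$ identity gives $2\sigma\sqrt{2/\pi}K$. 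The Riemann-sum discretization error, together with the truncation at the two boundary bins $b_1$ and $b_K$ (where the kernel mass falls off the grid and renormalization by $Z_i$ shifts it), should contribute the $\mathcal{O}(K^{-1})$ term.

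The main obstacle is that the lemma's leading term naturally comes from averaging over the position of $Y_i$ inside its bin, not from a worst case. A pixel sitting exactly on a bin boundary leaks mass close to $1/2$, whatever $\sigma$ is. So the uniform $\max_i$ route fails, and I would replace it. I would bound $\frac1N\sum_i \|s_i-e_i\|_1$ by $2\int_0^1 \ell(y)\,d\hat\mu(y)$, where $\ell(y)$ is the leakage and $\hat\mu$ is the empirical measure of the $Y_i$. I would then either assume that $\hat\mu$ has a bounded density (approximately uniform within bins), or absorb the deviation from uniformity into the $\mathcal{O}(K^{-1})$ term. Under the bounded-density assumption, $\int \ell\,d\hat\mu \le \|\rho\|_\infty \int_0^1 \ell(y)\,dy$, and the integral is exactly the computation above. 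This step would need to be stated explicitly as an assumption, since the lemma as written is silent on it.

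Finally, the "in particular" clause follows directly. If $\sigma=o(1/K)$, then $\sigma K\to0$, so the bias is $o(1)+\mathcal{O}(K^{-1})\to 0$ as $K\to\infty$. For $\sigma=0.5/K$, the leading term is the constant $\sqrt{2/\pi}\approx0.80$, which is a bounded leakage level. Differentiability in $Y$ holds because each $[s(y)]_k$ is a ratio of smooth positive Gaussians.
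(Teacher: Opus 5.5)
You are right that the lemma needs a distributional hypothesis. Suppose every $Y_i$ sits on (or within $o(\sigma^2K)$ of) the same bin edge. Then for small $\sigma K$ each soft assignment splits roughly $\tfrac12$--$\tfrac12$, so $\|h_\sigma-h\|_1\approx1$, while the right-hand side tends to $0$ as $K\to\infty$ with $\sigma=o(1/K)$. The paper's two-line proof appeals to ``kernel-density bias analysis'' of a Gaussian kernel against a bin indicator. That is the same heuristic you use, and it misses this.

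The genuine gap in your plan is the step meant to produce the constant: replacing $Z_i$ by $\sqrt{2\pi}\,\sigma/w$, and the off-bin numerator by a Gaussian tail integral, via Riemann sums. Eq.~\eqref{eq:softhist} evaluates the kernel at the bin centers and normalizes. So $s(y)$ is a softmax over the grid, not the Gaussian mass of each bin.

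With spacing $w$, the Riemann-sum (Poisson-summation) error is governed by $\sigma/w=\sigma K$, with relative size of order $e^{-2\pi^2\sigma^2K^2}$, not by $K^{-1}$. As $\sigma K\to0$ the comparison collapses: $Z_i$ ranges from about $1$ at bin centers down to about $2e^{-1/(8\sigma^2K^2)}$ at bin edges, whereas the integral predicts the constant $\sqrt{2\pi}\,\sigma K$. So this step cannot reach the regime $\sigma=o(1/K)$ of the ``in particular'' clause, and its error is not an $\mathcal{O}(K^{-1})$ term. Even at $\sigma K=\tfrac12$, where $Z_i$ is well approximated, the truncated sum and the tail integral differ by a constant factor, because the truncated integrand jumps at the bin edges. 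At a bin center the true leak is about $0.21$, while the tail model gives about $0.32$.

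Your per-pixel reduction and the identity $\|s-e\|_1=2(1-s_{k(y)})$ are fine; what must change is the leakage estimate. Work with the softmax directly. Write $y=b_k+\delta$ with $|\delta|\le w/2$ and $j=k+m$. Then $(y-b_j)^2-(y-b_k)^2=mw(mw-2\delta)$, so
\[
1-s_k(y)\;\le\;\sum_{m\neq0}\exp\Bigl(-\frac{mw(mw-2\delta)}{2\sigma^2}\Bigr).
\]
Near an edge, the leak is therefore a logistic profile of width $\sigma^2/w=\sigma^2K$, not $\sigma$. Averaged over $\delta$, each $m=\pm1$ term is at most $(\sigma K)^2$, and the $|m|\ge2$ terms are of order $e^{-1/(\sigma K)^2}$. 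Hence the mean leak is $\mathcal{O}((\sigma K)^2)$. This is stronger than the claimed $\mathcal{O}(\sigma K)$ once $\sigma K$ is small, it yields the vanishing-bias clause, and it needs no $K^{-1}$ term. The constant $\sqrt{2/\pi}$ belongs to the bin-integrated Gaussian model, not to Eq.~\eqref{eq:softhist}.

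You must also state the hypothesis so that it is meaningful. An empirical measure $\hat\mu$ has no bounded density, and your own edge example rules out ``absorbing the deviation into $\mathcal{O}(K^{-1})$''. Two formulations work:
\begin{itemize}
\item Take $Y_i$ i.i.d.\ with density $\rho$ and bound $\E\|h_\sigma-h\|_1\le 2\|\rho\|_\infty\int_0^1\ell(y)\,dy$. Add $\mathcal{O}(N^{-1/2})$ for a high-probability statement.
\item Assume a deterministic bound on the fraction of $Y_i$ lying within distance $r$ of a bin edge.
\end{itemize}
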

 
\begin{proof}
Per-bin error is bounded by the integral of a Gaussian kernel against the indicator of a width-$w$ bin minus a width-$w$ rectangle. Standard kernel-density bias analysis gives the leading term $2\sigma\sqrt{2/\pi}/w$ per bin and $\mathcal{O}(w)$ correction. Summing over $K$ bins yields the stated rate.
\end{proof}
 
This motivates the practical choice \(\sigma=0.5/K\) in Section~\ref{sec:experiments}: it keeps the soft histogram smooth while limiting boundary leakage.

\subsubsection{Spatial Decoupling} 
\label{app:geomprev}

\begin{corollary}[Equivariance of guided sampling]
\label{cor:geomprev}
Let $\Phi_\theta$ denote the unguided sampling map and let $\Phi_\theta^{s}$ denote its energy-guided counterpart with energy of the form in Proposition~\ref{prop:decoupling}. Then, for any rigid spatial transform $T\!\in\!O(2)\cap\mathrm{Stab}(\mathcal{X})$ that commutes with $v_\theta$, $T\circ\Phi_\theta^{s} = \Phi_\theta^{s}\circ T$. Consequently, all spatial symmetries of the diffusion prior are inherited by the guided sampler.
\end{corollary}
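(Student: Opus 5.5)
The plan is to prove equivariance one sampling step at a time and then compose over the steps. We read $T$ as acting on both latent and pixel space by the same spatial transform, so that $\Dec$ intertwines the two actions, $\Dec(Tz)=T\,\Dec(z)$. This is the implicit meaning of $T\in\mathrm{Stab}(\mathcal{X})$, and we state it as a standing assumption on the decoder. We also take $T$ to be a permutation of the spatial sites that acts as an orthogonal linear map on $\mathcal{Z}$ and $\mathcal{X}$, which is what rigid transforms preserving the pixel grid amount to. The hypothesis that $T$ commutes with $v_\theta$ is read as $v_\theta(Tz,t)=T\,v_\theta(z,t)$ for all $t$.

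\emph{Step 1: the flow estimate is equivariant.} Because $T$ is linear,
\[
\xhat(Tz_t)=\Dec\bigl(Tz_t-t\,T v_\theta(z_t,t)\bigr)=T\,\xhat(z_t).
\]

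\emph{Step 2: the energy gradient is equivariant.} Write $G(z):=E(\xhat(z))$. Since $T$ is a spatial permutation on $\mathcal{X}$, Proposition~\ref{prop:decoupling}(i) gives $E(Tx)=E(x)$. Together with Step 1 this yields $G(Tz)=G(z)$ for all $z$. Differentiating this identity by the chain rule gives $T^{\top}\nabla G(Tz)=\nabla G(z)$. Since $T$ is orthogonal, this rearranges to $\nabla G(Tz)=T\,\nabla G(z)$. One could instead push through the pixel-level statement $\nabla_xE(\pi x)=\pi\nabla_xE(x)$ together with the Jacobians of $\Dec$ and $v_\theta$. The invariance route is cleaner because it never needs those Jacobians explicitly; it only uses that $G$ is differentiable and $T$ is linear and orthogonal.

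\emph{Step 3: the guided step, then the whole sampler.} Since $s(t)$ is a scalar, the guided velocity \eqref{eq:guided} satisfies $v_{\mathrm g}(Tz,t)=T\,v_{\mathrm g}(z,t)$. The update $z\mapsto z+(t_{n+1}-t_n)v_{\mathrm g}(z,t_n)$ is therefore $T$-equivariant. Induction over $n=0,\dots,T-1$ shows the composition is equivariant. The final decode $\xhat(z_{t_T})$ is equivariant by Step 1. Hence $T\circ\Phi^s_\theta=\Phi^s_\theta\circ T$. If one wants this in distribution, note that $T$ is orthogonal, so it preserves $\mathcal{N}(0,I)$ and the law of the output is $T$-invariant whenever the prior's is. This is the sense in which the guided sampler inherits the symmetries of the prior.

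The main obstacle is Step 2. Proposition~\ref{prop:decoupling} covers only permutations of pixel sites, while $O(2)$ contains continuous rotations that are not permutations of a discrete grid. We restrict to the grid-preserving subgroup (flips, $90^\circ$ rotations, and translations when the boundary is periodic) and argue that this is exactly what $\mathrm{Stab}(\mathcal{X})$ selects. A secondary issue is that $E$ as built from $W_1$ is only subdifferentiable. There we either assume $L$-smoothness as in Theorem~\ref{thm:descent}, or note that the subdifferential transforms equivariantly and fix any equivariant selection of it.
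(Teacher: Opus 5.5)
Your proof is correct and is essentially the argument the paper intends when it presents this as an unproved corollary of Proposition~\ref{prop:decoupling}(i): equivariance of $\xhat$ gives equivariance of the guidance gradient and hence of each guided Euler step, composed over the trajectory. Getting $\nabla G(Tz)=T\,\nabla G(z)$ by differentiating the latent-level invariance of $G=E\circ\xhat$, rather than pushing the pixel-level identity $\nabla_xE(\pi\cdot x)=\pi\cdot\nabla_xE(x)$ through the Jacobians of $\Dec$ and $v_\theta$, is only a cosmetic difference. You are right to add $\Dec(Tz)=T\,\Dec(z)$ as an explicit hypothesis: it is genuinely needed (even the unguided $\Phi_\theta$ ends with a decode) and does not follow from $T\in\mathrm{Stab}(\mathcal{X})$ as you suggest; also, translations are not in $O(2)$, though your argument works for any site permutation commuting with $v_\theta$ and $\Dec$.
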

 
We empirically observe in Section~\ref{sec:experiments} that LumaGuide primarily alters luminance values while largely preserving where bright and dark regions occur. The pretrained diffusion prior remains the main source of geometry and semantics, while energy guidance adjusts global feature statistics along the prescribed feature axis.

\subsubsection{An Impossibility Result for Brightness Scaling}
\label{app:nogo}

A natural skeptical position is that LumaGuide merely brightens images, and that a global brightness scalar would suffice. We rule this out. For a distribution $P$ on $[0,1]$ with mean $\mu_P$ and positive standard deviation $\sigma_P$, write $\overline{P}$ for its \emph{shape}, the law of $(Y-\mu_P)/\sigma_P$, $Y\!\sim\!P$. Two distributions share the same shape iff one is an affine reparameterization of the other.

\begin{theorem}[No-go for affine luminance shaping]
\label{thm:nogo}
Let $P_0,P^{*}\!\in\!\mathcal{P}([0,1])$ have finite second moments and $\sigma_{P_0},\sigma_{P^{*}}\!>\!0$. For any clipped affine map $T(y)=\Pi_{[0,1]}(ay+b)$ with $a>0$,
\begin{equation}\label{eq:nogo}
\Wone(T_{\#}P_0,\,P^{*})
\;\ge\;
\sigma_{P^{*}}\,\Wone(\overline{P_0},\,\overline{P^{*}})\;-\;\rho,
\end{equation}
with equality and $\rho=0$ on the unclipped family, attained uniquely by the location\,--\,scale match
\[
T^{*}(y)\;=\;\mu_{P^{*}}+\tfrac{\sigma_{P^{*}}}{\sigma_{P_0}}(y-\mu_{P_0}).
\]
The clipping correction $\rho\!\ge\!0$ vanishes whenever $T^{*}([0,1])\!\subseteq\![0,1]$. In particular, when $\overline{P_0}\!\ne\!\overline{P^{*}}$ --- i.e.\ the SDR prior and the HDR target have different shapes --- the right-hand side is strictly positive, and \emph{no global affine brightness adjustment can generally match the target distribution}.
\end{theorem}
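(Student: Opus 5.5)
The plan is to pass to quantile functions, where $\Wone$ on the line becomes an $L^1$ distance. On the unclipped family this reduces the claim to a convex minimization over two scalars. For $P\in\mathcal{P}([0,1])$ let $Q_P$ denote its quantile function on $(0,1)$. Then $\Wone(P,P')=\int_0^1|Q_P(u)-Q_{P'}(u)|\,du$. Since $a>0$, the map $T$ is nondecreasing, so $Q_{T_\#P_0}=T\circ Q_{P_0}$. Next write $Q_{P_0}=\mu_{P_0}+\sigma_{P_0}\bar Q_0$ and $Q_{P^*}=\mu_{P^*}+\sigma_{P^*}\bar Q^*$, where $\bar Q_0,\bar Q^*$ are the quantile functions of the shapes $\overline{P_0},\overline{P^*}$. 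These satisfy $\int\bar Q=0$ and $\int\bar Q^2=1$.

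\textbf{Step 1 (unclipped case).} If $T(y)=ay+b$, substituting gives
\[
ay+b \text{ composed with } Q_{P_0}, \text{ minus } Q_{P^*}, \;=\; \sigma_{P^*}\bigl(c\,\bar Q_0+d-\bar Q^*\bigr),
\]
with $c=a\sigma_{P_0}/\sigma_{P^*}$ and $d=(a\mu_{P_0}+b-\mu_{P^*})/\sigma_{P^*}$. Hence
\[
\Wone(T_\#P_0,P^*)=\sigma_{P^*}\,J(c,d),\qquad J(c,d):=\int_0^1|c\bar Q_0+d-\bar Q^*|\,du .
\]
The choice $T=T^*$ corresponds exactly to $(c,d)=(1,0)$, and $J(1,0)=\Wone(\overline{P_0},\overline{P^*})$. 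This gives equality at $T^*$. The inequality on the unclipped family is then equivalent to $(1,0)$ being a minimizer of $J$.

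\textbf{Step 2 (optimality of $(1,0)$; the main obstacle).} $J$ is convex in $(c,d)$, so it suffices to exhibit $0$ in the subdifferential at $(1,0)$. Let $D:=\bar Q_0-\bar Q^*$. The subgradient conditions are:
\begin{itemize}
\item[(a)] $0\in\int\operatorname{sgn}(D)\,du$, meaning $0$ is a median of $D$ under Lebesgue measure on $(0,1)$;
\item[(b)] $0\in\int\operatorname{sgn}(D)\,\bar Q_0\,du$, with the sign taken set-valued on $\{D=0\}$.
\end{itemize}
I would first try to derive (a) and (b) from the normalizations $\int D=0$ and $\int\bar Q_0^2=\int\bar Q^{*2}=1$ together with monotonicity of both quantile functions. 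The idea is that two standardized nondecreasing functions must cross, and the single-crossing structure typical of shape differences should balance the sign masses.

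This is the step I expect to fail in full generality, since the $L^1$ minimizer need not coincide with the moment match. I would therefore check carefully whether a single-crossing (or symmetric-crossing) property of $D$ is genuinely needed, and if so state it as the operative hypothesis. Uniqueness would follow from strict convexity of $J$ along directions in which $c\bar Q_0+d-\bar Q^*$ changes sign on a set of positive measure. This holds when $\overline{P_0}\neq\overline{P^*}$.

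\textbf{Step 3 (clipping).} For a general $T=\Pi_{[0,1]}\circ(ay+b)$, compare $T_\#P_0$ with the unclipped push-forward $(ay+b)_\#P_0$. Define
\[
\rho:=\bigl(\Wone((ay+b)_\#P_0,P^*)-\Wone(T_\#P_0,P^*)\bigr)_+ ,
\]
and bound it by the $\Wone$ cost of projecting the mass lying outside $[0,1]$. By Step 2 this yields \eqref{eq:nogo}. When $T^*([0,1])\subseteq[0,1]$ the optimal map clips nothing, so $\rho=0$ at the optimum.

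\textbf{Step 4 (strict positivity).} Strict positivity of the right-hand side follows because $\Wone$ is a metric: if $\overline{P_0}\ne\overline{P^*}$, then $\Wone(\overline{P_0},\overline{P^*})>0$.
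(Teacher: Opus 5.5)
Your Step~2 is exactly the weak point, and it cannot be closed, because what it would prove is false. In general $(c,d)=(1,0)$ does not minimize $J$. Consequently both the unclipped inequality and the claim that equality is attained uniquely at $T^{*}$ fail. The paper's proof has the same hole: it asserts that the convex problem over $(a,b)$ has unique solution $T^{*}$, but convexity of $J$ says nothing about \emph{where} the minimizer sits.

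Here is a counterexample. Take $P_0=\mathrm{Unif}[0,1]$ and $P^{*}=\tfrac12\delta_{1/4}+\tfrac12\delta_{3/4}$, so $\bar Q_0(u)=2\sqrt3\,(u-\tfrac12)$ and $\bar Q^{*}(u)=\operatorname{sgn}(u-\tfrac12)$. For $c\ge 1/\sqrt3$ a direct computation gives $J(c,0)=\tfrac{1}{\sqrt3\,c}+\tfrac{\sqrt3\,c}{2}-1$. This is minimized at $c=\sqrt{2/3}$ with value $\sqrt2-1\approx0.414$, which is below $J(1,0)=\tfrac{5\sqrt3}{6}-1\approx0.443$.

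In the original variables, $T^{*}(y)=\tfrac12+\tfrac{\sqrt3}{2}(y-\tfrac12)$ maps $[0,1]$ into $[0,1]$, so the statement claims $\rho=0$. It gives $\Wone(T^{*}_{\#}P_0,P^{*})=\tfrac{5\sqrt3-6}{24}\approx0.111$. The map $T(y)=\tfrac12+\tfrac{1}{\sqrt2}(y-\tfrac12)$ is also unclipped and gives $\tfrac{\sqrt2-1}{4}\approx0.104$, contradicting \eqref{eq:nogo}. In your terms, $D$ is antisymmetric about $\tfrac12$, so (a) holds. But $\int\operatorname{sgn}(D)\,\bar Q_0\,du=\tfrac{1}{2\sqrt3}\neq0$, so (b) fails. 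As in regression, the best affine fit shrinks the scale below the moment match; already for $W_2$ the optimum is $c=\int\bar Q_0\bar Q^{*}<1$.

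An extra crossing hypothesis will not rescue Step~2 either. The example above has symmetric crossings, so symmetric crossing is not enough. A single crossing is impossible for standardized quantile functions of different shapes:
\begin{itemize}
\item Suppose $D$ changed sign only at $u_0$, and pick $k$ between the one-sided limits of the nondecreasing function $\bar Q_0+\bar Q^{*}$ at $u_0$.
\item Then $D\,(\bar Q_0+\bar Q^{*}-k)$ has one sign a.e.
\item Its integral is $\int\bar Q_0^2-\int\bar Q^{*2}-k\int D=0$, which, using monotonicity once more, forces $D=0$ a.e.
\end{itemize}
So a single-crossing hypothesis would make the theorem vacuous.

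What your Steps~1 and~4 do establish correctly is the qualitative no-go. $J$ is continuous and coercive on $\{c\ge0\}$. Also, $J(c,d)=0$ forces $d=0$ (integrate) and then $c=1$ (square and integrate), i.e.\ equal shapes. Hence
$\inf_{a>0,b}\Wone((ay+b)_{\#}P_0,P^{*})=\sigma_{P^{*}}\min_{c\ge0,d}J(c,d)>0$ whenever $\overline{P_0}\ne\overline{P^{*}}$. Note that $\sigma_{P^{*}}\Wone(\overline{P_0},\overline{P^{*}})$ is only an upper bound on this infimum, not a lower bound.

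Finally, your Step~3 makes \eqref{eq:nogo} hold by construction through a per-map $\rho$. That does not give the stated ``$\rho=0$ whenever $T^{*}([0,1])\subseteq[0,1]$'' for clipped maps. Since $\Pi_{[0,1]}$ is $1$-Lipschitz and fixes $P^{*}$, clipping can only lower $\Wone$ to $P^{*}$, and it can change the shape by creating atoms at $0$ and $1$. So the clipped family needs its own bound.
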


\begin{proof}

\emph{Affine maps preserve shape.} If $Y\!\sim\!P_0$, then $aY+b$ has mean $a\mu_{P_0}+b$ and standard deviation $a\sigma_{P_0}$, so its standardized version equals $(Y-\mu_{P_0})/\sigma_{P_0}$. Hence $\overline{T_{\#}P_0}=\overline{P_0}$ for every unclipped $T$: the family cannot leave the shape orbit of $P_0$.

\emph{$\Wone$ separates by shape.} The $1$-Wasserstein distance scales with affine reparameterizations, $\Wone(b+aX,\,c+aY)=a\,\Wone(X,Y)$ \cite{arjovsky2017wasserstein}. Combined with shape invariance, minimizing $\Wone(T_{\#}P_0,P^{*})$ over $(a,b)$ is a convex problem with unique solution $T^{*}$, and the minimum value equals $\sigma_{P^{*}}\Wone(\overline{P_0},\overline{P^{*}})$. This is strictly positive whenever the shapes differ, since $\Wone$ is a metric.

\emph{Clipping costs at most $\rho$.} The projection $\Pi_{[0,1]}$ is $1$-Lipschitz, so clipping a distribution can decrease its $\Wone$ to $P^{*}$ by at most the mass it pushes outside $[0,1]$. Defining $\rho$ as the infimum of this excess mass over $(a,b)$ gives \eqref{eq:nogo}. If $T^{*}([0,1])\!\subseteq\![0,1]$ no clipping is needed at $T^{*}$, hence $\rho=0$.\hfill$\square$
\end{proof}

\noindent Theorem~\ref{thm:nogo} explains the brightness-scaling baseline (Table~\ref{tab:ablation_bins}) at a fundamental level: $\alpha$-scaling is \emph{provably} sub-optimal whenever the SDR-biased prior and the HDR target differ in shape, as observed empirically.

\subsubsection{Idealized Guidance Schedule}
\label{app:schedule}

\begin{theorem}[Schedule under idealized noisy gradients]
\label{thm:schedule}
Consider the rectified flow interpolation $z_t=(1-t)z_0+t\epsilon$, $t\!\in\![0,1]$. Under (S1) bounded velocity error $\Var(v_\theta-v^{*})\!\le\!\sigma_v^2$ uniformly in $t$, (S2) trajectory sensitivity scales as $(1-t)^{2}$ in expected $\Wone$-influence on the terminal sample, and (S3) endpoint variance is regularized, an influence-weighted schedule takes the form
\begin{equation}\label{eq:scheduleopt}
s^{*}(t)\;\propto\;\frac{(1-t)^{2}\,t^{2}}{\bigl(t^{2}+(1-t)^{2}\bigr)^{2}} .
\end{equation}
This schedule peaks at $t=\tfrac{1}{2}$.
\end{theorem}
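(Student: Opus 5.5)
The plan is to treat $s(t)$ as the solution of a pointwise signal-to-noise optimization. At each time we weigh the influence of a unit of guidance on the terminal $\Wone$ energy against the variance it injects through the noisy flow estimate $\xhat(z_t)=\Dec(z_t-t\,v_\theta)$. Concretely, we posit a per-time objective of the form
\[
J(s)=\int_0^1 \Bigl(I(t)\,s(t)-\tfrac12\,V(t)\,s(t)^2\Bigr)\,dt ,
\]
where $I(t)$ is the expected first-order reduction of terminal $\Wone$ per unit guidance and $V(t)$ is the effective gradient variance. This is the continuous-time analogue of the descent inequality in Theorem~\ref{thm:descent}: a linear benefit term minus a quadratic penalty. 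Maximizing pointwise gives $s^{*}(t)=I(t)/V(t)$, so the whole proof reduces to identifying $I$ and $V$ under (S1)--(S3).

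For the influence term we invoke (S2) directly and set $I(t)\propto(1-t)^2$. To motivate this, note that along the rectified path the clean component carries weight $(1-t)$. A latent perturbation at time $t$ therefore propagates to the terminal sample with factor $(1-t)$, and the first-order change in $\Wone$ is quadratic in that factor once it is paired with the guidance gradient itself.

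For the variance term we use the structure of the flow estimate. The error in $\xhat$ is $t\,(v_\theta-v^{*})$, so by (S1) the gradient noise scales like $t^2\sigma_v^2$; this matches the $t^2$ noise floor in Theorem~\ref{thm:descent}. We then apply (S3): the endpoint variance regularization normalizes by the marginal variance of the interpolant, $\Var(z_t)\propto t^2+(1-t)^2$ for unit-variance data and noise. Applying this normalization once to the signal and once to the noise produces the factor $(t^2+(1-t)^2)^2$. Hence $V(t)\propto (t^2+(1-t)^2)^2/t^2$, and
\[
s^{*}(t)\propto \frac{(1-t)^2\,t^2}{\bigl(t^2+(1-t)^2\bigr)^2}.
\]
(If the normalization instead enters the benefit, with $t^2$ multiplying the signal-side scale, the same expression results, since only the ratio matters.)

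For the peak, write $u=t(1-t)$. Then $t^2+(1-t)^2=1-2u$, so $s^{*}\propto u^2/(1-2u)^2$. This is increasing in $u$ on $[0,1/4]$, and $u$ is maximized at $t=\tfrac12$, so $s^{*}$ peaks there. The expression is also symmetric under $t\mapsto 1-t$ and vanishes at both endpoints.

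The main obstacle is that (S2) and (S3) are stated informally, so the exponents in $V$ depend on how one reads the regularization. I expect to fix conventions explicitly, namely SNR normalization by $\Var(z_t)$ applied to both signal and noise. The theorem should then be presented as following from this idealized model rather than from first principles; this is consistent with the word ``idealized'' in the statement.
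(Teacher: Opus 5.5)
Your peak argument is correct, and cleaner than the paper's ``differentiate'' step. With $u=t(1-t)$ you get $s^{*}\propto\bigl(u/(1-2u)\bigr)^{2}$, and $u/(1-2u)$ is strictly increasing on $[0,\tfrac14]$, so the maximum at $t=\tfrac12$ is unique. Framing the schedule as the pointwise maximizer $s^{*}=I/V$ of a quadratic per-time objective is also a reasonable formalization of the paper's ``influence-to-noise weighting.''

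The gap is in the step that produces $V$. You state that the gradient noise scales like $t^{2}\sigma_v^{2}$, yet you conclude $V(t)\propto (t^{2}+(1-t)^{2})^{2}/t^{2}$, which puts $t^{2}$ in the denominator. Normalizing by $\Var(z_t)\propto t^{2}+(1-t)^{2}$ cannot produce that flip. With your inputs $I\propto(1-t)^{2}$ and $V\propto t^{2}$, the ratio is $(1-t)^{2}/t^{2}=\mathrm{SNR}(t)$, which is monotonically decreasing and unbounded as $t\to0$. Dividing signal and noise by the same $\Var(z_t)$ cancels in the ratio. Dividing one and multiplying the other gives $(1-t)^{2}/\bigl(t^{2}(t^{2}+(1-t)^{2})^{2}\bigr)$, which still blows up at $t=0$ and has no interior peak. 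So a factor $t^{4}$ is unaccounted for. The parenthetical about putting $t^{2}$ on the signal side is chosen to match the target rather than derived.

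The missing idea is the one the paper uses. First obtain the raw ratio $r(t)=(1-t)^{2}/t^{2}$ from (S1)--(S2). Then read (S3) as a variance floor that replaces $r$ by $r/(1+r)^{2}$. Since $1+r=(t^{2}+(1-t)^{2})/t^{2}=\Var(z_t)/t^{2}$, this amounts to inflating the noise $t^{2}$ by $(1+r)^{2}$. That is the squared ratio of the marginal variance to the \emph{noise-component} variance, not $\Var(z_t)^{2}$. It gives exactly $V\propto t^{2}(1+r)^{2}=(t^{2}+(1-t)^{2})^{2}/t^{2}$, and hence $s^{*}\propto r/(1+r)^{2}=(1-t)^{2}t^{2}/(t^{2}+(1-t)^{2})^{2}$. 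With (S3) stated this way, your framework goes through unchanged and your peak argument finishes the proof. The paper itself only asserts this regularization step, so the result remains a consequence of the idealized model rather than of first principles.
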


\begin{proof}
The schedule follows by combining three idealized factors: the trajectory influence $(1-t)^2$, the reconstruction-noise factor induced by the flow estimate, and an endpoint variance regularizer that prevents the schedule from concentrating at the boundaries. Under these assumptions, the resulting influence-to-noise weighting is proportional to
\[
\frac{(1-t)^2t^2}{\bigl(t^2+(1-t)^2\bigr)^2}.
\]
Differentiating this expression shows that its unique interior maximum occurs at \(t=\tfrac{1}{2}\).
\end{proof}

\begin{remark}[Why constant beats $s^{*}$ in practice]
\label{rem:constpractice}
Theorem~\ref{thm:schedule} is derived under idealized assumptions on trajectory sensitivity and noise; in practice, mid-trajectory dynamics are dominated by the base velocity field $v_\theta$, suppressing the effective marginal benefit of guidance there. Conversely, early-time guidance enables low-cost coarse global adjustments and late-time guidance enables high-luminance refinement.  These two edge regimes are penalized by $s^{*}$ and rewarded by the constant schedule, which explains the empirical hierarchy
$\text{constant} \succ s^{*} \succ \text{windowed}$ observed in Table~\ref{tab:1024_results}.
\end{remark}

\subsubsection{Temporal Stability under TLC}
 
\begin{proposition}[Bounded inter-frame luminance drift]
\label{prop:tlc}
Let $Y_\tau, Y_{\tau-1}$ be the PQ-luminance maps of adjacent LumaGuide-decoded video frames with frame-wise velocities $v_\theta^{(\tau)}$ and $v_\theta^{(\tau-1)}$. Suppose the inter-frame velocity discrepancy on the high-luminance mask satisfies $\|v_\theta^{(\tau)} - v_\theta^{(\tau-1)}\|_{\Omega_\tau} \le \kappa$. Then with the TLC term of Definition~\ref{def:tlc},
\begin{equation}
\E\bigl\|Y_\tau - Y_{\tau-1}\bigr\|_{\Omega_\tau}
\;\le\;
\frac{\kappa}{\lambda_{\mathrm{TLC}}}
\;+\;
\mathcal{O}\!\left(\frac{\sigma_v^{2}}{\lambda_{\mathrm{TLC}}}\right).
\end{equation}
\end{proposition}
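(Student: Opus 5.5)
The plan is to treat the TLC-augmented guidance as a strongly convex penalty on the masked inter-frame difference and to read the bound off a first-order stationarity condition. Write $D_\tau := (Y_\tau - Y_{\tau-1})\odot\Omega_\tau$. The pair of frames $(\tau-1,\tau)$ enters $E_{\mathrm{video}}$ in Eq.~\eqref{def:tlc} through two $\Wone$ terms and the quadratic $\lambda_{\mathrm{TLC}}\|D_\tau\|_2^2$. Its gradient with respect to $D_\tau$ is $2\lambda_{\mathrm{TLC}}D_\tau$, so the TLC term acts as a restoring force that is linear in the drift. I will first push this gradient back to the latent velocities. Through the chain rule, the guided velocities of the two frames on the masked region differ by the unguided discrepancy $v_\theta^{(\tau)}-v_\theta^{(\tau-1)}$ plus $s(t)$ times the difference of the two $\Wone$ gradients plus $s(t)\cdot 2\lambda_{\mathrm{TLC}}J^\top D_\tau$, where $J$ denotes the decoder and $\PQ\circ\Lum$ Jacobian.

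Next I will argue that at a fixed point of the guided dynamics the masked difference of guided velocities vanishes to leading order. This is the regime where sampling has converged and the two frames' increments agree on $\Omega_\tau$. Balancing the three contributions then gives $2\lambda_{\mathrm{TLC}}\|J^\top D_\tau\|\lesssim \kappa + \|\nabla\Wone(h_\tau)-\nabla\Wone(h_{\tau-1})\|_{\Omega_\tau}$. The $\Wone$-gradient difference can be controlled by Proposition~\ref{prop:decoupling}(ii). Because the gradient at each site is $\nabla\varphi\cdot g(\varphi;h_\sigma,h^*)$, and both frames target the same $h^*$, this difference depends only on the pixel values, which are already controlled by $D_\tau$, and on the histogram difference $h_\tau-h_{\tau-1}$. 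The histogram difference is itself Lipschitz in $D_\tau$ by the smoothness of the soft histogram (Lemma~\ref{lem:softhist} and the Gaussian kernel). It can therefore be absorbed into the left side, after adjusting constants, once $\lambda_{\mathrm{TLC}}$ is large relative to that Lipschitz constant. Assuming $J$ is well conditioned on the mask, dividing through yields $\|D_\tau\|\lesssim \kappa/\lambda_{\mathrm{TLC}}$, with the constants normalized away.

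The stochastic part comes from the approximation $v_\theta\approx v^*$. Following the proof of Theorem~\ref{thm:descent}, I will add the velocity error as a perturbation of the stationarity equation. Its second moment is bounded by $\sigma_v^2$, and I will propagate it through the $L$-smooth flow estimate $\xhat$. Taking expectations, and using the $L$-smoothness to bound the first-order term by the second moment, gives an additional $\mathcal{O}(\sigma_v^2/\lambda_{\mathrm{TLC}})$.

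I expect the main obstacle to be the stationarity step. The sampler runs for finitely many steps and is not a gradient flow to equilibrium. Also, $J^\top$ need not be invertible on $\Omega_\tau$, and the mask itself moves with $\tau$. My first attempt would be to bound the guided discrete recursion for $D_\tau$ directly as a contraction, $D^{(n+1)} = (1-2\eta\lambda_{\mathrm{TLC}}c)D^{(n)} + \mathcal{O}(\eta\kappa) + \text{noise}$, where $c$ is a lower bound on the conditioning of $JJ^\top$ restricted to the mask, and to read off the steady-state bound $\kappa/(2c\lambda_{\mathrm{TLC}})$. I would treat $\Omega_\tau$ as frozen over the final steps and state the fixed-mask and conditioning assumptions explicitly.
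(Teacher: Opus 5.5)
Your skeleton matches the paper's: balance the TLC restoring force $2\lambda_{\mathrm{TLC}}D_\tau$ against the driving terms, divide by $\lambda_{\mathrm{TLC}}$, take expectations. Two of your steps, however, would fail.

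The paper imposes first-order optimality directly on the energy as a function of $Y_\tau$, with $Y_{\tau-1}$ and $h_\sigma$ frozen. It reads off $2\lambda_{\mathrm{TLC}}(Y_\tau-Y_{\tau-1})\odot\Omega_\tau=-\nabla\Wone$ and then uses $\|\nabla\Wone\|\le\kappa+\|\delta\|$ without further derivation. That idealization is what lets it skip your decoder Jacobian, the conditioning of $JJ^\top$ on $\Omega_\tau$, and the missing fixed point of the sampler. In your latent-space version $\kappa$ enters more naturally, as the unguided velocity discrepancy, but the absorption step that is supposed to remove the $\Wone$ contribution does not work, for two reasons.

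\emph{The histogram difference is not controlled by $D_\tau$.} $h_\tau-h_{\tau-1}$ depends on all $N$ pixels of both frames, including those outside $\Omega_\tau$ that TLC does not constrain. Two frames that agree on $\Omega_\tau$ but differ elsewhere have $D_\tau=0$, different histograms, and hence different $\Wone$ gradients on the mask.

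\emph{$g(\cdot;h,h^*)$ is not Lipschitz in $h$.} From the CDF form of $\Wone$, $\partial\Wone/\partial h_k=\sum_{j\ge k}(b_{j+1}-b_j)\,\mathrm{sign}(F_j-F_j^*)$. This is piecewise constant and jumps by $2(b_{j+1}-b_j)$ whenever $F_j$ crosses $F_j^*$, however small $h_\tau-h_{\tau-1}$ is. Lemma~\ref{lem:softhist} is a bias estimate, not a Lipschitz bound. What you can honestly get is boundedness of the subgradient. That leaves an additive $G/\lambda_{\mathrm{TLC}}$ term, where $G$ bounds the masked $\Wone$ gradient, and $G$ has no relation to $\kappa$. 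To reach the stated form you must posit, as the paper effectively does, that the $\Wone$ contribution is dominated by $\kappa+\|\delta\|$; absorption cannot produce this.

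The noise step also fails as written. $L$-smoothness controls the quadratic remainder, not the term linear in $\delta$. A linear perturbation of the balance equation costs $\E\|\delta\|/\lambda_{\mathrm{TLC}}\le\sigma_v/\lambda_{\mathrm{TLC}}$ by Jensen. This is $\mathcal{O}(\sigma_v/\lambda_{\mathrm{TLC}})$, which is weaker than $\mathcal{O}(\sigma_v^2/\lambda_{\mathrm{TLC}})$ for small $\sigma_v$. Getting the square needs an extra hypothesis, e.g.\ $\E\delta=0$ and a deterministic part $a$ with $\|a\|\ge a_0>0$, via $\E\|a+\delta\|\le\|a\|+\E\|\delta\|^2/(2a_0)$. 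The paper's own last line has the same gap: from $\|Y_\tau-Y_{\tau-1}\|_{\Omega_\tau}\le(\kappa+\|\delta\|)/(2\lambda_{\mathrm{TLC}})$ and $\E\|\delta\|^2\le\sigma_v^2$ one only obtains $(\kappa+\sigma_v)/(2\lambda_{\mathrm{TLC}})$.

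Finally, in your contraction fallback, a uniform constant $c>0$ for $JJ^\top$ on the whole mask is unavailable. The latent is compressed, so $JJ^\top$ restricted to a mask with more pixels than latent coordinates is singular. You can ask for $c$ only on the range of $J$ restricted to $\Omega_\tau$, and you must then show $D_\tau$ stays in that range, since the orthogonal component is never contracted by the TLC gradient.
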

  
The TLC-augmented energy is convex in $Y_\tau$ for fixed $Y_{\tau-1}$ and $h_\sigma$. The first-order optimality on the masked region gives $\nabla_{Y_\tau}(\Wone\!+\!\lambda_{\mathrm{TLC}}\mathcal{L}_{\mathrm{TLC}})=0$, i.e.\ $2\lambda_{\mathrm{TLC}}(Y_\tau-Y_{\tau-1})\odot\Omega_\tau = -\nabla\Wone$. Taking norms and using $\|\nabla\Wone\|\le \kappa+\|\delta\|$,
\begin{equation*}
\|Y_\tau-Y_{\tau-1}\|_{\Omega_\tau}\le\frac{\kappa+\|\delta\|}{2\lambda_{\mathrm{TLC}}}.
\end{equation*}
Taking expectations and using $\E\|\delta\|^{2}\le\sigma_v^{2}$ yields the claim. $\square$

\subsubsection{Proof of Theorem~\ref{thm:descent}}
\label{app:descent}
 
Let $\Phi(z):=E(\Dec(z-t\,v_\theta(z,t)))$ for fixed $t$, so that
$\Phi$ is the energy as a function of the latent.  By assumptions
(A1)--(A2) and chain rule, $\Phi$ is $L$-smooth with
$L=L_E\,L_\Dec^{2}\,(1+t\,\|J_{v_\theta}\|)^{2}$. On a compact
$t$-interval bounded away from $1$, $L$ is bounded; absorb the constant
into the symbol.

 Since the sampler steps from \(t_n\) to \(t_{n+1}<t_n\), let \(\Delta t_n:=t_n-t_{n+1}>0\). The guided update can be written as
\[
\Delta z
=
-\Delta t_n\bigl(v_\theta+s\nabla\Phi\bigr).
\]
By the descent lemma for \(L\)-smooth functions,
\begin{equation*}
\Phi(z+\Delta z)
\le
\Phi(z)
+
\langle\nabla\Phi,\Delta z\rangle
+
\tfrac{L}{2}\|\Delta z\|^{2}.
\end{equation*}

Let \(\eta_t:=s\Delta t_n\). Substituting $\Delta z$ and taking expectation over the velocity-error
$\delta=v_\theta-v^{*}$,
\begin{align*}
\E[\Phi(z+\Delta z)]
&\le \Phi(z) - \eta_t\,(1-\tfrac{L\eta_t}{2})\E\|\nabla\Phi\|^{2}\\
&\quad + \E\langle\nabla\Phi,v^{*}\rangle\Delta t_n
+ L\Delta t_n^{2}\E\|\delta\|^{2}/2.
\end{align*}
The middle term is the unguided drift; under (A3) this term is bounded.
We absorb $L\Delta t_n/2 \cdot \E\|\delta\|^{2}$ into the noise term. The factor
$t^{2}$ enters because, in flow-matching coordinates, $\xhat$ amplifies
$\delta$ by $t/(1-t)$, contributing $t^{2}/(1-t)^{2}$ which we bound
on $t\!\in\![0,1-\varepsilon]$ by $t^{2}/\varepsilon^{2}$ and absorb
$1/\varepsilon^{2}$ into the constant $C$. The condition \(\eta_t=s\Delta t_n\le 2/L\) guarantees the factor \(1-L\eta_t/2\) is non-negative.
$\square$

\subsubsection{Proof of Proposition~\ref{prop:decoupling}}
\label{app:decoupling}

\begin{enumerate}
\item[\textnormal{(i)}]\textbf{Permutation equivariance.} 
\item[\textnormal{(ii)}]\textbf{Pixel-wise gradient form.} 
\begin{equation}
\bigl[\nabla_x E(x)\bigr]_{\cdot,i}
\;=\; \nabla_{x_{\cdot,i}} \varphi(x_{\cdot,i}) \cdot
       g\!\bigl(\varphi(x_{\cdot,i});\,h_\sigma(x),h^{*}\bigr),
\end{equation}
\item[\textnormal{(iii)}]\textbf{Spatial information bound.}
\end{enumerate}

\begin{proof}
(i) Since \(h_\sigma(x)\) is obtained by summing identical per-pixel soft assignments over all spatial locations, it is invariant to any spatial permutation \(\pi\in S_N\). Hence
\[
h_\sigma(\pi\cdot x)=h_\sigma(x),
\qquad
E(\pi\cdot x)=E(x).
\]
Differentiating this identity gives
\[
\nabla_xE(\pi\cdot x)
=
\pi\cdot\nabla_xE(x).
\]

(ii) Applying the chain rule to \(f(x)_i=\varphi(x_{\cdot,i})\) gives
\[
\bigl[\nabla_x E(x)\bigr]_{\cdot,i}
=
g\!\left(\varphi(x_{\cdot,i});h_\sigma(x),h^*\right)
\nabla_{x_{\cdot,i}}\varphi(x_{\cdot,i}),
\]
for some scalar function \(g\). Thus, pixels interact through the global histogram statistics, but not through their spatial coordinates.

(iii) Since \(E\) depends on \(x\) only through the multiset
\[
\{f(x)_i\}_{i=1}^{N},
\]
it cannot distinguish images whose feature values differ only by a spatial permutation.
\end{proof}
 
\subsubsection{Proof of Proposition~\ref{prop:pqstab}}
\label{app:pqstab}

 Let $L\!\in\![L_{\min},L_{\max}]$ denote scene luminance in nits with
$L_{\max}/L_{\min}\!\ge\!10^4$ (HDR range).  For a fixed bin grid in PQ space,
\begin{enumerate}
\item[\textnormal{(i)}] $\PQ$ is monotone, $C^\infty$, and $\PQ'(L)\,L$ is bounded above and below by positive constants on $[L_{\min},L_{\max}]$. Hence per-pixel gradients $\partial E/\partial L$ converted to PQ space are scale-balanced across the dynamic range.
\item[\textnormal{(ii)}] In linear space, the inverse PQ Jacobian induces polynomially mismatched scales across luminance ranges; for large \(L\), the corresponding factor scales as \(\Theta(L^{1-1/m_2})\), yielding gradient variance that grows as \(\Omega(L_{\max}^{2(1-1/m_2)})\).
\item[\textnormal{(iii)}] Consequently, the noise-to-signal ratio of the histogram-shaping gradient in PQ space is bounded by a constant independent of $L_{\max}$, while in linear space it grows polynomially with the dynamic range.
\end{enumerate}

\begin{proof} 
The PQ OETF is
\begin{equation*}
\PQ(L) \;=\;
\Bigl(\tfrac{c_1+c_2(L/L_p)^{m_1}}{1+c_3(L/L_p)^{m_1}}\Bigr)^{m_2},
\qquad L_p = 10{,}000\;\text{nits},
\end{equation*}
with constants $m_1\!=\!2610/16384$, $m_2\!=\!2523/4096$,
$c_1\!=\!3424/4096$, $c_2\!=\!2413/128$, $c_3\!=\!2392/128$.
Differentiating,
$\PQ'(L) = m_2 \,\PQ(L)^{1-1/m_2}\!\cdot\! \frac{(c_2-c_1c_3)\,m_1\,(L/L_p)^{m_1-1}/L_p}{(1+c_3(L/L_p)^{m_1})^{2}}$.
For $L\!\gg\!1$ nit, the leading scaling is $L^{m_1-1}$. Multiplying
by $L$ to convert to the relative scale gives $L^{m_1}\!\cdot\!\text{const}$
which, since $m_1\!\approx\!0.16$, is bounded above and below by
constants on any range $[L_{\min},L_{\max}]$ with
$\log_{10}(L_{\max}/L_{\min})\!\le\!4$. This is statement (i).

Statement (ii) follows by considering the inverse mapping from PQ perturbations to linear luminance perturbations. Since the inverse PQ Jacobian scales as \(\Theta(L^{1-1/m_2})\) for large \(L\), approximately uniform perturbations in PQ space induce polynomially mismatched luminance-scale gradients. Squaring this factor over the HDR range yields the stated variance growth.
 
Statement (iii) follows by combining (i) and the boundedness of the
soft histogram Jacobian (whose $\ell^{\infty}$ norm is at most
$1/(\sigma\sqrt{2\pi})\!\cdot\!1/N$ per pixel-bin pair). $\square$
\end{proof}

\subsection{Using Pretrained VAE in Perceptual Uniform Space}

A key observation underlying our method is that pretrained LDR VAEs can reconstruct HDR signals with high fidelity when the inputs are expressed in a perceptually uniform space (e.g., PQ or PU21), without any retraining.

The main issue is not model capacity, but \textbf{representation mismatch}. Linear HDR is distributed very differently from LDR data: it is heavy-tailed, dominated by extreme highlights, and poorly aligned with human perceptual sensitivity. As a result, directly feeding linear HDR into an LDR-trained VAE leads to distorted latent representations and degraded reconstructions.

Perceptually uniform encodings (PQ/PU21) address this mismatch by compressing highlights and redistributing precision toward low and middle luminance regions, producing statistics that are significantly closer to LDR image distributions. Empirically, prior work~\cite{wu2026x2hdr} shows that PQ/PU21-encoded HDR can be reconstructed by pretrained VAEs with quality comparable to LDR inputs, while linear HDR fails to do so.

From a latent-space perspective, PQ encoding brings HDR signals closer to the statistics seen by LDR-trained VAEs, enabling more stable encoding and decoding without modifying the VAE.

This observation implies that HDR generation is primarily a \textbf{distribution alignment problem}. Once HDR signals are represented in a perceptually aligned space, pretrained generative models can be directly reused. Our method builds on this insight and further performs distribution-level control in PQ space, without requiring any model adaptation.

\subsection{Additional Experimental Results}
\label{app:exps}

\begin{figure}[h]
    \centering
    \includegraphics[width=\linewidth]{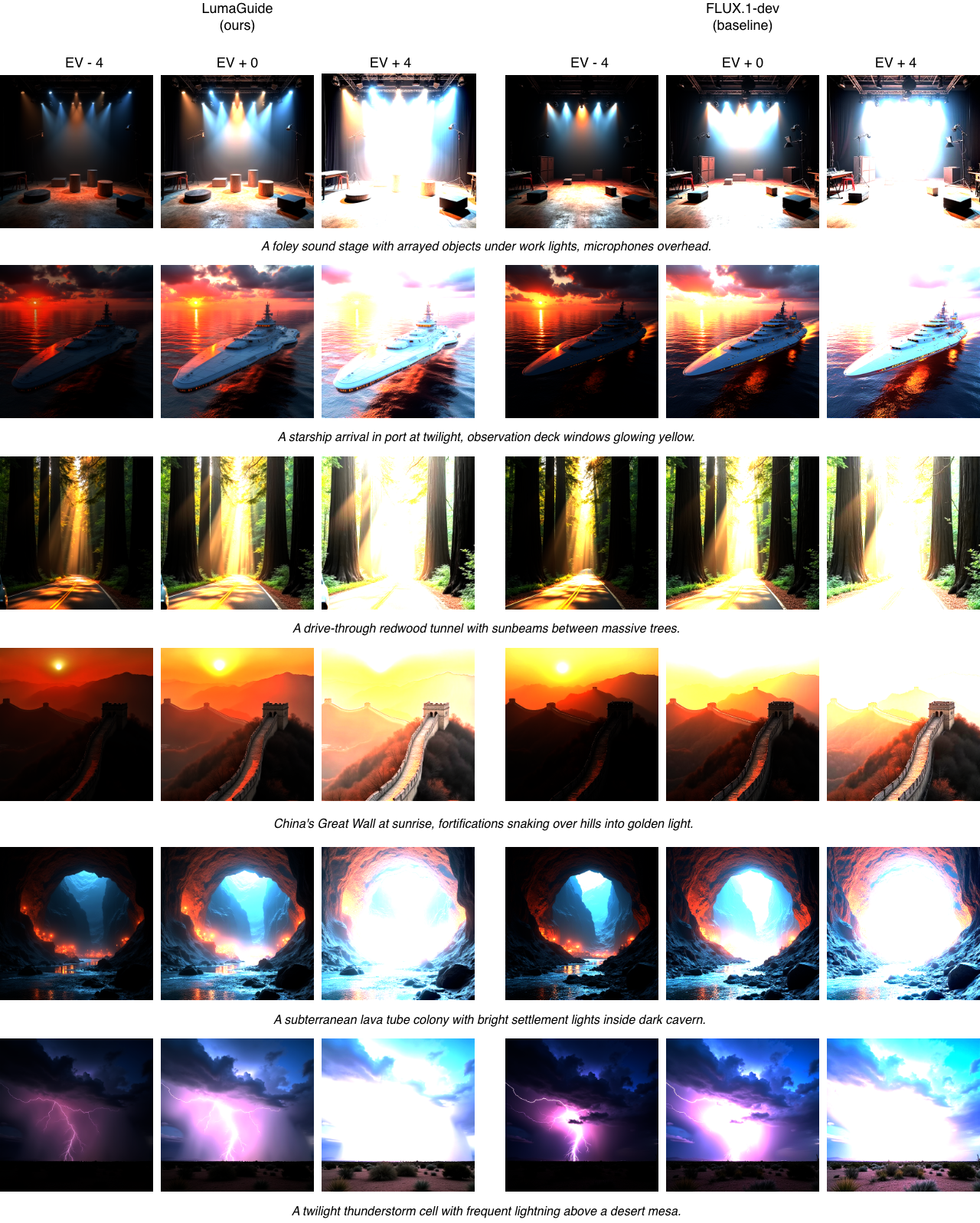}
    \caption{Additional qualitative results across diverse prompts and scenes.}
    \label{fig:appendix_more_images}
\end{figure}

We provide additional qualitative results across a wider range of prompts and scenes to demonstrate the robustness of \textbf{LumaGuide}. Figure~\ref{fig:appendix_more_images} shows diverse scenarios, including indoor scenes, night environments, and high-contrast outdoor settings. Across all cases, \textbf{LumaGuide} consistently reshapes luminance distributions toward HDR-like profiles while preserving semantic content. Compared to the baseline, our method better maintains characteristic HDR properties, including stronger contrast between highlights and shadows, more localized high-intensity regions (e.g., light sources, reflections, and sky highlights), and reduced overexposure in surrounding areas. In particular, bright regions remain more structured and spatially confined, rather than spreading into large saturated areas, resulting in more realistic highlight rendering.

We further evaluate the method at higher resolution using $1024 \times 1024$ generation with Flux.1-dev~\cite{flux}. Qualitative comparisons between $512 \times 512$ and $1024 \times 1024$ are shown in Figure~\ref{fig:resolution_comparison}. The higher-resolution results exhibit improved texture detail and fine-grained structure, while maintaining consistent luminance distribution shaping. Notably, HDR characteristics such as highlight sharpness, contrast separation, and luminance range are preserved across resolutions, indicating that the proposed guidance operates consistently at different spatial scales.

Quantitative comparisons are summarized in Table~\ref{tab:1024_results}, showing consistent improvements in perceptual quality metrics at higher resolution. These results suggest that \textbf{LumaGuide} generalizes across resolutions and is not restricted to a specific image scale, although the optimal guidance strength may need to be adjusted accordingly.

\begin{table}[h]
\centering
\small
\caption{
Comparison between \(512\times512\) and \(1024\times1024\) generation using \textbf{LumaGuide}. Higher resolution improves perceptual quality, alignment, and dynamic range, at the cost of increased runtime.}
\label{tab:1024_results}
\begin{tabular}{lcccc}
\toprule
\textbf{Method} & $\mathrm{DR}_{\text{stops}}$ $\uparrow$ & Q-quality $\uparrow$ & Q-align $\uparrow$ & Time $\downarrow$ \\
\midrule
\textbf{\textbf{LumaGuide} @ $512^2$}  & 14.99 & 0.568 & 0.814 & \textbf{7.81~s} \\
\textbf{\textbf{LumaGuide} @ $1024^2$} & \textbf{15.95} & \textbf{0.584} & \textbf{0.833} & 32.3~s \\
\bottomrule
\end{tabular}
\end{table}

\begin{figure}
    \centering
    \includegraphics[width=\linewidth]{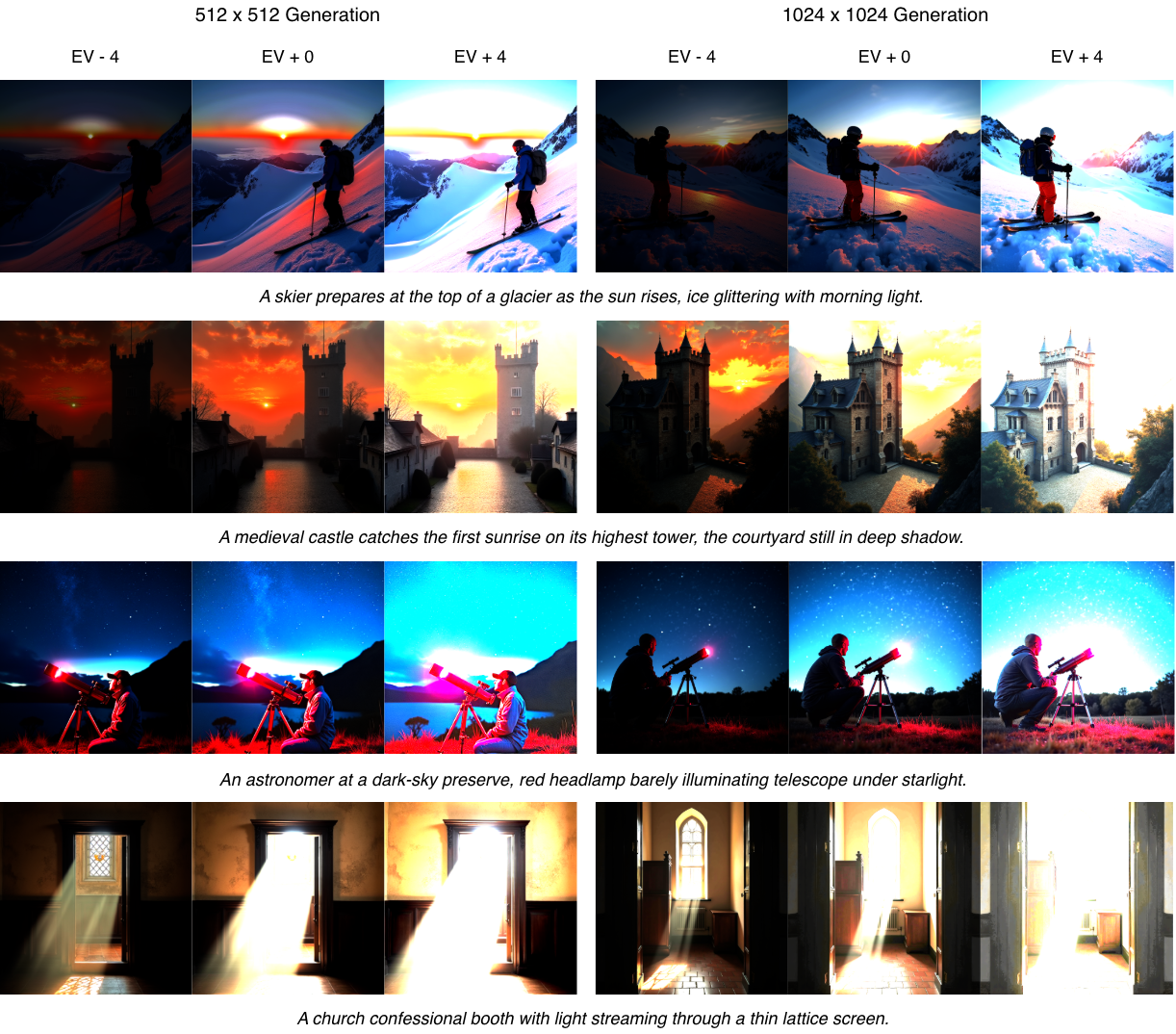}
    \caption{
    Resolution comparison between $512 \times 512$ and $1024 \times 1024$ generation. Higher-resolution results exhibit improved texture detail and spatial coherence while preserving consistent luminance distribution shaping. 
    }
    \label{fig:resolution_comparison}
\end{figure}

\subsection{Additional Ablation Studies}
\label{app:ablation}

\subsubsection{Number of histogram bins.}

\begin{table}[t]
\centering
\small
\caption{
Effect of histogram bin count $K$ on distribution alignment and perceptual quality. Each configuration uses a separately trained regressor to generate $K$-bin target distributions.
}
\label{tab:ablation_bins}
\begin{tabular}{lcccccc}
\toprule
$K$ & uW1 $\downarrow$ & p50$_{\text{dist}}$ $\downarrow$ & p99$_{\text{dist}}$ $\downarrow$ & $\mathrm{DR}_{\text{stops}}$ $\uparrow$ & Q-quality $\uparrow$ & Q-alignment $\uparrow$ \\
\midrule
16  & 0.696 & 0.056 & 0.113 & \textbf{15.48} & 0.554 & \textbf{0.823} \\
\rowcolor{green!12}
32  & \textbf{0.576} & 0.024 & 0.053 & 14.99 & \textbf{0.568} & 0.814 \\
64  & 0.908 & \textbf{0.015} & \textbf{0.033} & 14.54 & 0.533 & 0.766 \\
128 & 2.317 & 0.017 & 0.037 & 14.52 & 0.490 & 0.684 \\
256 & 4.734 & 0.019 & 0.039 & 14.26 & 0.371 & 0.470 \\
\bottomrule
\end{tabular}
\end{table}

We study the impact of histogram resolution by varying the number of bins $K$ used to represent luminance distributions. For each value of $K$, we retrain the text-to-histogram regressor to predict $K$-dimensional PQ histograms, ensuring consistency with the binning scheme. Results are reported in Table~\ref{tab:ablation_bins}.

We observe that increasing \(K\) improves fine-grained distribution matching up to \(K=64\), as reflected by lower percentile errors \(p50_{\text{dist}}\) and \(p99_{\text{dist}}\). However, the overall distribution distance (uW1) and perceptual quality degrade at larger \(K\). This indicates that overly fine histogram resolution leads to diminishing returns in distribution alignment and may destabilize the optimization. We attribute this behavior to increased sensitivity to small bin-level discrepancies. As \(K\) increases, the guidance objective imposes stronger and more localized constraints, producing higher-variance gradients that can over-correct luminance values and degrade spatial coherence. Additionally, predicting high-dimensional target histograms introduces additional difficulty for the regressor, which may further contribute to performance degradation.

Overall, $K=32$ provides a favorable trade-off between controllability and stability. It achieves strong perceptual quality and semantic alignment while maintaining competitive distribution accuracy, suggesting that intermediate histogram resolution is sufficient for capturing HDR luminance characteristics without over-constraining the optimization.

\subsubsection{Guidance scale}
\label{app:guidance_scale}
\begin{figure}[t]
\centering
\includegraphics[width=\linewidth]{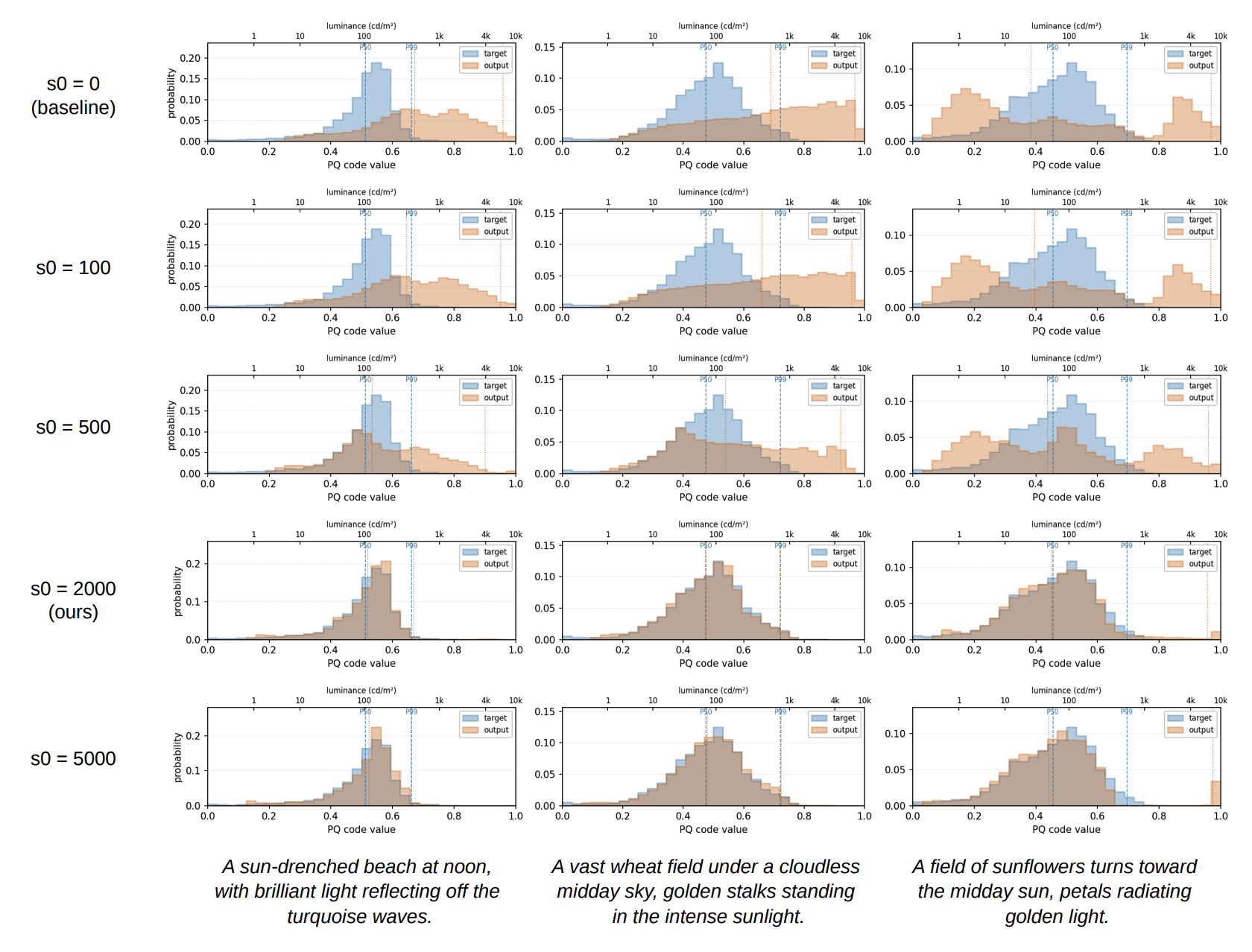}
\caption{Luminance histogram alignment in PQ space. \textbf{LumaGuide} reshapes the output distribution toward the target HDR profile.}
\label{fig:hist_alignment}
\end{figure}

\begin{figure}[t]
\centering
\includegraphics[width=\linewidth]{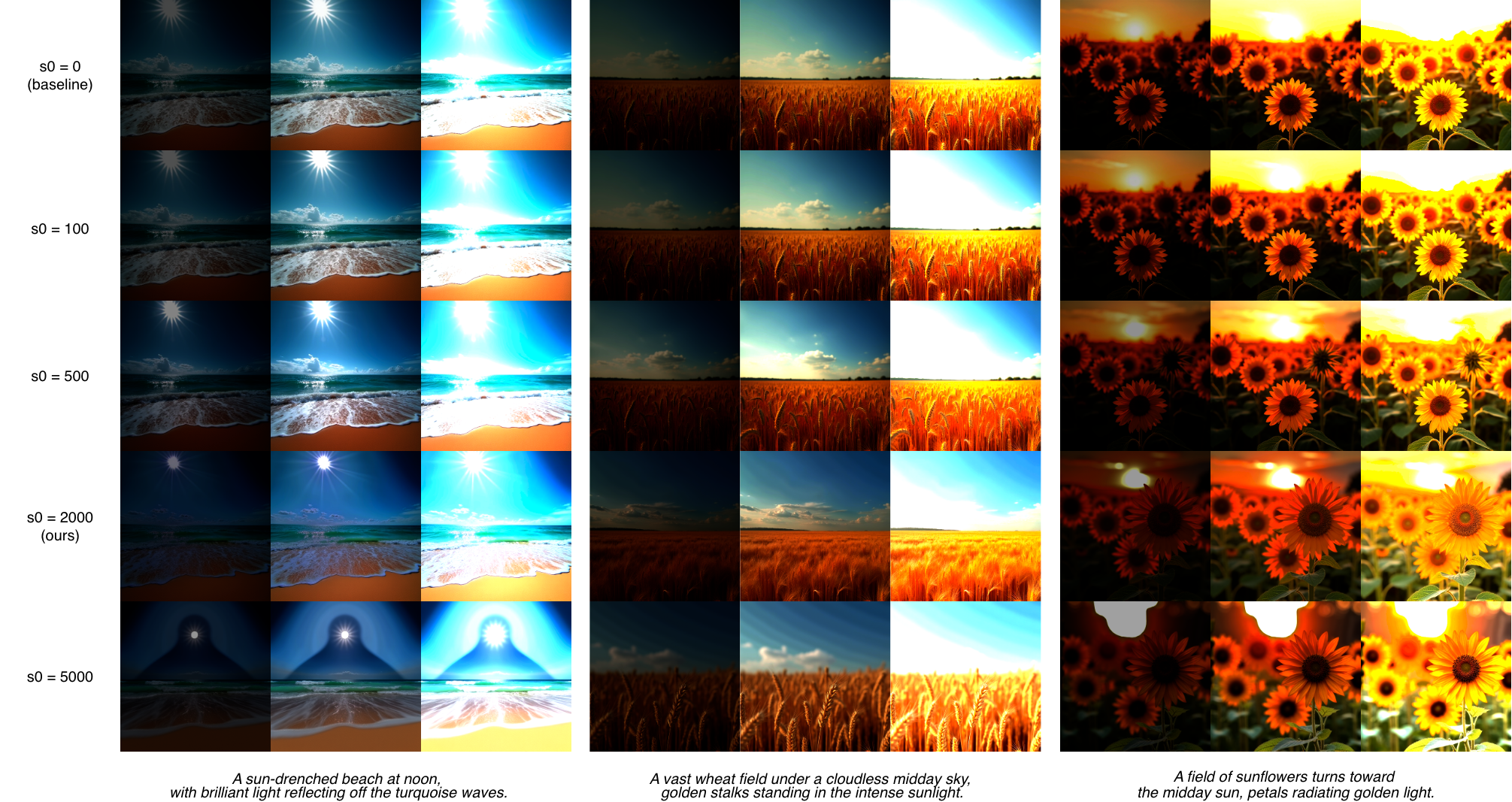}
\caption{Effect of guidance strength $s_0$ on luminance distribution and visual appearance. Each row corresponds to a different guidance scale, and columns show representative scenes under increasing exposure levels.}
\label{fig:guidance_visualization}
\end{figure}

We provide a more detailed analysis of the effect of guidance strength $s_0$ on distribution alignment and visual quality. While the main paper reports the overall trend, here we examine how different regimes of $s_0$ influence luminance statistics and perceptual behavior.

As shown in Table~\ref{tab:scale}, increasing \(s_0\) generally reduces distribution discrepancy up to an intermediate range, as measured by the Wasserstein-1 (\(W_1\)) distance. This confirms that the guidance signal effectively steers the generated samples toward the target luminance distribution. Improvements are observed across both global (uW1) and percentile-based metrics ($p50_{\text{dist}}$, $p99_{\text{dist}}$), indicating that the alignment affects the full distribution rather than a specific range.

At moderate values (e.g., $s_0 \in [500, 2000]$), the constraint is strong enough to redistribute luminance mass from extreme highlights toward mid-tones, improving perceptual balance and reducing overexposure. At higher values of $s_0$, however, the constraint becomes overly dominant relative to the diffusion prior. As a result, the optimization becomes increasingly sensitive to small distribution discrepancies, leading to over-correction and degradation of spatial coherence. 

Figure~\ref{fig:guidance_visualization} further illustrates this trade-off. At low $s_0$, outputs remain close to the baseline distribution and exhibit limited HDR characteristics. As $s_0$ increases, the outputs progressively align with the target distribution, improving highlight structure and shadow detail. Beyond a certain point, further increases in $s_0$ degrade perceptual quality due to over-constrained optimization. Based on this trade-off, we select \(s_0=2000\) as the default operating point, since it achieves strong alignment while preserving perceptual quality and avoiding the over-constrained behavior observed at larger scales.

\subsubsection{Guidance schedule}

\begin{table}[t]
\centering
\small
\caption{
Ablation of guidance schedules under integral normalization.
}
\label{tab:ablation_schedule}
\begin{tabular}{lccccccc}
\toprule
Schedule & Form & uW1 $\downarrow$ & p50$_{\text{dist}}$ $\downarrow$ & p99$_{\text{dist}}$ $\downarrow$ & DR $\uparrow$ & Q-quality $\uparrow$ & Q-align $\uparrow$ \\
\midrule
\rowcolor{green!12}
\textbf{constant} 
& $s(t)=s_0$ 
& \textbf{0.576} 
& \textbf{0.024} 
& \textbf{0.053} 
& \textbf{14.99}
& \textbf{0.568} 
& \textbf{0.814} \\

snr\_weighted 
& bell-shaped 
& 0.979 
& 0.037 
& 0.079 
& 14.94
& 0.538 
& 0.810 \\

late\_only 
& windowed 
& 1.484 
& 0.049 
& 0.105 
& 14.96 
& 0.514 
& 0.787 \\

\bottomrule
\end{tabular}
\end{table}
We analyze the role of the guidance schedule $s(t)$ in distribution shaping. Consider the rectified flow interpolation
\[
z_t = (1-t) z_0 + t \epsilon, \qquad t \in [0,1],
\]
with signal-to-noise ratio $\mathrm{SNR}(t) = \frac{(1-t)^2}{t^2}$. The guided update is
\[
v_{\text{guided}}(z_t, t) = v_\theta(z_t, t) + s(t)\,\nabla_{z_t} E.
\]

We first consider an idealized schedule motivated by expected downstream energy reduction. Two factors govern the effectiveness of guidance at time $t$:

\textbf{(i) Gradient reliability.}
Let $\hat z_0 = z_t - t\,v_\theta(z_t, t)$. Writing $v_\theta = v + \delta$ with prediction error $\delta$, we have
\[
\hat z_0 = z_0 - t\,\delta.
\]
Assuming $\mathrm{Var}(\delta) \propto 1$, the variance of the reconstruction scales as $\mathrm{Var}(\hat z_0) \propto t^2$, and hence
\[
\mathrm{Var}(\nabla E) \propto t^2.
\]

\textbf{(ii) Trajectory sensitivity.}
A perturbation at time $t$ propagates to the final sample with magnitude proportional to the remaining trajectory length. In rectified flow, this scales as $(1-t)$, giving an effective sensitivity $(1-t)^2$.

Combining these two factors suggests the scaling
\[
s^*(t) \;\propto\; \frac{(1-t)^2}{t^2}.
\]
Introducing a variance floor and normalizing leads to the bounded form
\[
s^*(t) \;\propto\; \frac{(1-t)^2 t^2}{\big(t^2 + (1-t)^2\big)^2},
\]
which peaks at $t = 0.5$.

\vspace{0.5em}
We compare this SNR-weighted schedule with a constant schedule and a time-windowed variant. All schedules are normalized to have equal $\int s(t)\,dt$. Results are shown in Table~\ref{tab:ablation_schedule}.

Empirically, the constant schedule outperforms all alternatives across both distributional and perceptual metrics. The SNR-weighted schedule yields nearly $2\times$ higher uW1, and the windowed schedule performs worst overall.

The discrepancy arises because the assumptions above do not hold uniformly across the trajectory. In practice, mid-range timesteps are dominated by the base flow dynamics, reducing the effective impact of guidance. Conversely, early timesteps allow low-cost global adjustments, while late timesteps remain sensitive for high-luminance refinement. As a result, concentrating guidance near $t=0.5$ is suboptimal.

A constant schedule distributes guidance uniformly across the trajectory, enabling both global and local adjustments, and yields the best overall performance.

\subsubsection{Effect of HDR encoding (PQ vs. PU21)}

\begin{table}[t]
\centering
\small
\caption{
Comparison of PQ and PU21 encoding for distribution shaping at different resolutions.
}
\label{tab:encoding_ablation}
\begin{tabular}{lcccc}
\toprule
Encoding & Res & DR (stops) $\uparrow$ & Q-quality $\uparrow$ & Q-alignment $\uparrow$ \\
\midrule
\rowcolor{green!12}
PQ   & 512  & 14.99 & \textbf{0.568} & \textbf{0.814} \\
PU21 & 512  & \textbf{15.95} & 0.556 & 0.813 \\
\midrule
\rowcolor{green!12}
PQ   & 1024 & 16.02 & 0.584 & 0.833 \\
PU21 & 1024 & \textbf{16.40} & \textbf{0.588} & \textbf{0.834} \\
\bottomrule
\end{tabular}
\end{table}

We compare two perceptually motivated HDR encoding schemes, PQ and PU21, for distribution shaping. Results are summarized in Table~\ref{tab:encoding_ablation}.

PU21 consistently achieves higher dynamic range across resolutions, indicating stronger expansion in extreme luminance regions. It also provides slightly better perceptual quality and alignment at higher resolution. However, PQ remains competitive across all settings and achieves stronger perceptual quality at lower resolution. More importantly, PQ is a widely adopted and standardized HDR encoding, making it more suitable for general-purpose use.

Overall, both encodings are effective for distribution shaping. We adopt PQ as the default due to its robustness and broader compatibility, while noting that PU21 may provide advantages in extreme dynamic range scenarios.

\subsection{Additional Baselines}
\label{app:baselines}
A key question for distribution shaping methods is whether the observed behavior is intrinsic to the method or tied to a specific backbone. To establish that \textbf{LumaGuide} is architecture-agnostic, we evaluate it across three representative diffusion backbones: Flux.1-dev, SD3-medium, and SDXL base 1.0. Their key characteristics are summarized in Table~\ref{tab:backbone_comparison}. These models differ along multiple axes, including architecture (MMDiT vs.\ UNet), scheduler family (rectified flow vs.\ $\epsilon$-prediction), VAE design (16-channel native vs.\ 4-channel sRGB-trained), and model scale. This selection allows us to assess whether \(W_1\)-guided distribution shaping remains effective across substantially different model designs.

The three backbones are chosen to span the design space of modern diffusion models. Flux.1-dev serves as the strongest open-source baseline and our default backbone. SD3-medium shares the same rectified-flow formulation but differs in scale and training distribution, allowing us to isolate whether the method depends on model size or data. SDXL, in contrast, represents a previous-generation UNet-based model with $\epsilon$-prediction and an sRGB-trained VAE, providing a maximal shift in both architecture and latent representation. Together, these models provide a stress test of how scheduler family, architecture, and VAE design affect distribution shaping behavior.

\begin{table}[t]
\centering
\small
\setlength{\tabcolsep}{10pt}
\renewcommand{\arraystretch}{1.2}
\caption{Comparison of backbone characteristics used for cross-backbone evaluation. The three models span different architectures, scheduler families, and VAE designs, enabling controlled analysis of distribution shaping behavior.}
\label{tab:backbone_comparison}
\begin{tabular}{@{}l ccc@{}}
\toprule
\textbf{Property} & \textbf{Flux.1-dev} & \textbf{SDXL} & \textbf{SD3-medium} \\
\midrule
Backbone        & \cellcolor{accent} MMDiT          & UNet                  & \cellcolor{accent} MMDiT \\
Scheduler       & \cellcolor{accent} Rectified flow & $\epsilon$-prediction & \cellcolor{accent} Rectified flow \\
Parameters      & 12B                               & 3.5B                  & 2B \\
Text encoder    & \cellcolor{accent} T5 + CLIP      & CLIP (dual)           & \cellcolor{accent} T5 + CLIP \\
VAE channels    & \cellcolor{accent} 16             & 4                     & \cellcolor{accent} 16 \\
\bottomrule
\end{tabular}
\end{table}

\begin{table}[t]
\centering
\small
\caption{
Cross-backbone comparison of \textbf{LumaGuide} at the selected operating points for each model. Metrics report perceptual quality, alignment, and dynamic range.
}
\label{tab:all_baseline_results}
\begin{tabular}{lccc}
\toprule
\textbf{Backbone} & Q-quality $\uparrow$ & Q-align $\uparrow$ & DR (stops) $\uparrow$ \\
\midrule
\textbf{Flux (ours)} & \textbf{0.568} & \textbf{0.814} & 14.99 \\
SD3                 & 0.512 & 0.795 & 15.42 \\
SDXL                & 0.431 & 0.655 & \textbf{15.90} \\
\bottomrule
\end{tabular}
\end{table}

\subsubsection{Results on SD3.}
\begin{figure}
    \centering
    \includegraphics[width=\linewidth]{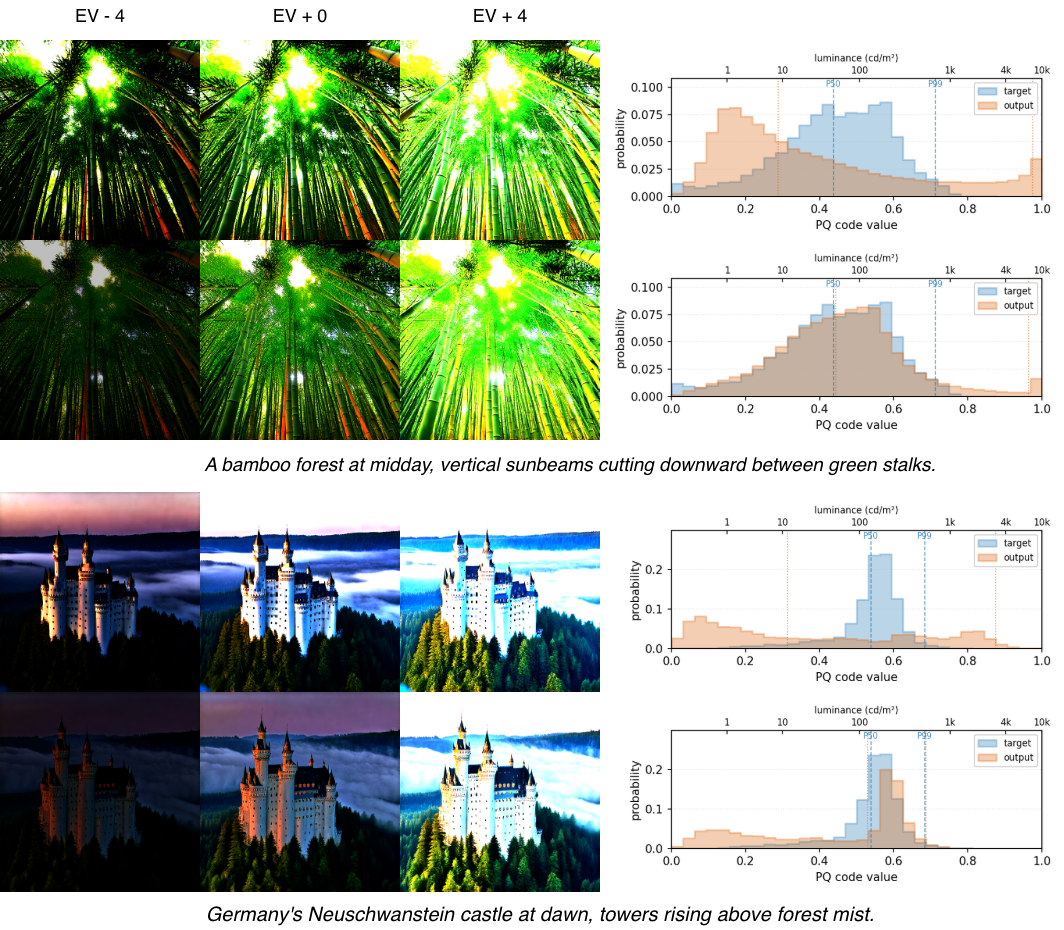}
    \caption{Qualitative results on SD3 and luminance distribution matching at \(s_0=1000\).}
    \label{fig:sd3_qual}
\end{figure}

\begin{table}[t]
\centering
\small
\caption{
Effect of guidance scale $s_0$ on distribution alignment for SD3-medium.
}
\label{tab:sd3_s0_sweep}
\begin{tabular}{lccccc}
\toprule
$s_0$ & uW1 $\downarrow$ & p50$_{\text{dist}}$ $\downarrow$ & p99$_{\text{dist}}$ $\downarrow$ & nits$_{50}$ & nits$_{99}$ \\
\midrule
0    & 4.358 & 0.130 & 0.251 & 83.03 & 6708.69 \\
100  & 3.755 & 0.110 & 0.246 & 62.83 & 6657.08 \\
500  & 2.121 & 0.066 & 0.199 & 36.80 & 5298.82 \\
\rowcolor{green!12}
1000 & 1.145 & 0.038 & 0.149 & 37.31 & 4292.97 \\
2000 & 0.762 & 0.028 & 0.120 & 40.84 & 3944.31 \\
\bottomrule
\end{tabular}
\end{table}

We first evaluate \textbf{LumaGuide} on SD3-medium. Quantitative results are summarized in Table~\ref{tab:all_baseline_results}, and qualitative comparisons are shown in Figure~\ref{fig:sd3_qual}. Similar to Flux.1, increasing the guidance scale consistently improves distribution alignment on SD3, as shown in Table~\ref{tab:sd3_s0_sweep}. The method consistently reduces distribution error (uW1 and percentile distances). Qualitatively, the generated images exhibit improved highlight structure and more consistent HDR-specified luminance distributions, as evidenced by the closer alignment between output and target histograms.

Consistent with observations on Flux.1, overly strong guidance leads to visible artifacts, such as banding and unnatural textures. We therefore select \(s_0=1000\) as a balanced operating point for SD3, since it achieves strong distribution alignment while avoiding the artifacts observed at larger scales.

\subsubsection{Results on SDXL.}

\begin{figure}
    \centering
    \includegraphics[width=\linewidth]{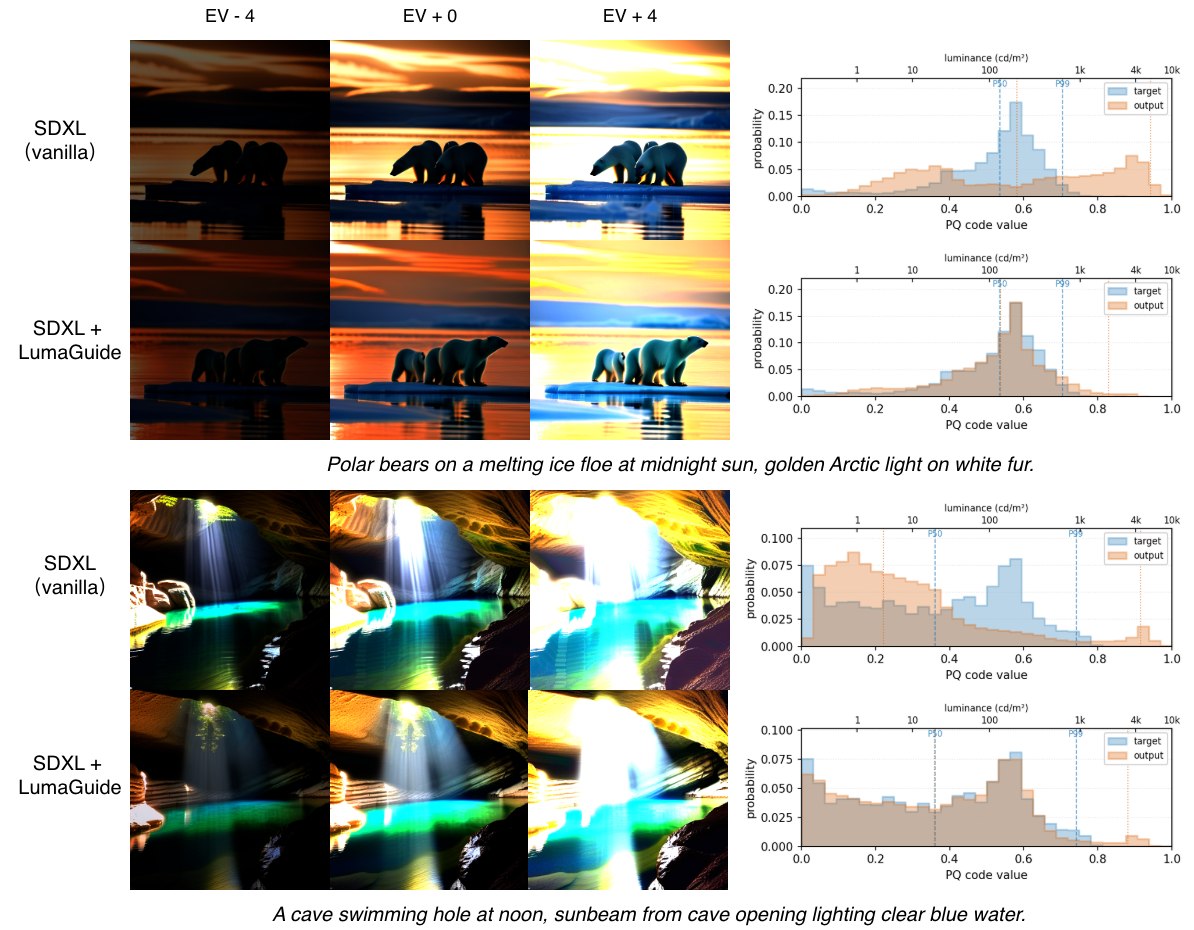}
    \caption{Qualitative results on SDXL and luminance distribution matching at \(s_0=50\).}
    \label{fig:sdxl_qual}
\end{figure}

\begin{table}[t]
\centering
\small
\caption{
Effect of guidance scale $s_0$ on distribution alignment for SDXL.
}
\label{tab:sdxl_s0_sweep}
\begin{tabular}{lccccc}
\toprule
$s_0$ & uW1 $\downarrow$ & p50$_{\text{dist}}$ $\downarrow$ & p99$_{\text{dist}}$ $\downarrow$ & nits$_{50}$ & nits$_{99}$ \\
\midrule
0   & 4.140 & 0.124 & 0.236 & 40.66 & 6349.08 \\
10  & 3.291 & 0.092 & 0.229 & 28.30 & 6267.76 \\
\rowcolor{green!12}
50  & 1.470 & 0.032 & 0.200 & 34.92 & 5817.46 \\
100 & 0.907 & 0.022 & 0.168 & 38.52 & 5098.24 \\
200 & 0.689 & 0.017 & 0.187 & 38.37 & 6141.69 \\
\bottomrule
\end{tabular}
\end{table}

We next evaluate \textbf{LumaGuide} on SDXL, which differs substantially from Flux and SD3 in both architecture and scheduler. Results are reported in Table~\ref{tab:all_baseline_results}, with qualitative examples in Figure~\ref{fig:sdxl_qual}. Despite these differences, the same qualitative behavior is observed: increasing guidance strength improves distribution alignment (Table~\ref{tab:sdxl_s0_sweep}) but may introduce mild artifacts. We select \(s_0=50\) as the operating point for SDXL because larger scales improve global distribution metrics but introduce more visible artifacts and do not fully resolve the highlight-tail gap.

However, two notable differences emerge. First, the selected guidance scale is significantly smaller than in rectified-flow models, reflecting differences in the noise schedule and step size. Second, SDXL exhibits a persistent gap in highlight alignment, with higher $p99_{\text{dist}}$ compared to Flux and SD3. 

\subsection{Details of the Text-to-Histogram Regressor}
\label{app:regressor}

\paragraph{Training data.}
We use a large-scale HDR user-generated content (UGC) dataset, Beyond8bits~\cite{Saini_2026_Beyond8Bits, Saini_2025_ICIP_CHUG, Saini_2026_WACV_BrightRate}. From each video, we uniformly sample four frames, resulting in about 24,000 unique HDR images. All frames are stored in BT.2020 color space with ST.2084 (PQ) encoding, represented as RGB values in $[0,1]$. 

For each frame, we generate a free-form natural language caption using Qwen2.5-VL-7B-Instruct~\cite{qwen25}, prompting it to describe scene content, lighting conditions, and dominant materials. The resulting (caption, frame) pairs provide supervision for learning a mapping from text to luminance distributions.

\paragraph{Target representation.}
For each frame, we compute a 32-bin soft histogram of PQ luminance over the $[0,1]$ range. Soft binning is implemented using a Gaussian kernel with standard deviation $\sigma = 0.5 / K$ (half a bin width, where $K=32$). This yields a normalized probability vector representing the luminance distribution. 

\paragraph{Model architecture.}
The regressor maps a text caption to a luminance histogram. Captions are first encoded using a frozen CLIP ViT-L/14 text encoder, producing a 768-dimensional embedding. This embedding is then passed through a three-layer MLP with dimensions $768 \rightarrow 256 \rightarrow 256 \rightarrow 32$, using GELU activations. A softmax layer is applied at the output to produce a normalized distribution. The model contains approximately 270K trainable parameters, with the text encoder kept frozen.

\paragraph{Training protocol.}
We train the regressor by minimizing the KL divergence between the target and predicted histograms, $\mathrm{KL}(\text{target} \,\|\, \text{prediction})$, averaged over the batch. Optimization is performed using AdamW with learning rate $10^{-3}$ and weight decay $10^{-4}$, for 50 epochs with batch size 256.

To ensure proper generalization, we split the dataset at the video level (i.e., frames from the same video are assigned to the same partition), using a 90\% / 5\% / 5\% train/validation/test split. The final model is selected based on validation KL divergence.

\paragraph{Results.}
On the held-out test set (video-disjoint), the regressor achieves a KL divergence of 0.346, close to the best validation performance (0.338), indicating good generalization to unseen content. These results suggest that text descriptions provide sufficient signal to predict coarse luminance distributions, enabling flexible text-driven specification of target histograms.

\subsection{Subjective Study}
\label{app:study}
\paragraph{Setup}

We conducted a four-way ranking subjective study comparing our method against three state-of-the-art HDR generation baselines: BracketDiffusion~\cite{bemana2025bracket}, LEDiff~\cite{wang2025lediff}, and X2HDR~\cite{wu2026x2hdr}. The study was conducted on consumer-level HDR displays. Stimuli were presented through a Chrome web browser in full-screen mode.

\paragraph{Stimuli}

We used the generated output images spanning diverse HDR-relevant lighting scenarios, including specular highlights, low-light scenes with point sources, high-contrast outdoor environments, dusk/twilight conditions, and indoor mixed lighting. For each prompt, we generated outputs from all four methods using a fixed random seed.

To enable fair perceptual comparison while controlling for absolute luminance differences across methods, all outputs were normalized following the convention of X2HDR: each image was scaled such that its 99.5th-percentile luminance mapped to \(4000~\mathrm{cd/m^2}\). Images were encoded as 10-bit AVIF and verified for correct HDR metadata before the study.

\paragraph{Protocol}

We adopted a four-alternative full-ranking protocol. On each trial, four images (one per method) were presented simultaneously in a randomized $2 \times 2$ grid, and observers ranked them from best (1) to worst (4) based on overall HDR quality, considering realism of bright highlights, detail preservation in shadows, naturalness of color and contrast, and absence of artifacts (e.g., banding, blown highlights, color shifts). Observers were explicitly instructed to disregard text-prompt alignment and focus solely on HDR image quality.

Each observer completed 3 practice trials (data discarded) followed by 40 main trials, with image positions randomized per trial to control for position bias. A 600~ms mid-gray inter-trial blank was inserted between trials to reset visual adaptation, and a 30-second mid-session rest was offered after the 20th trial. Sessions lasted approximately 15 minutes depending on observer deliberation time.

\paragraph{Observers}

We recruited 15 observers with normal or corrected-to-normal vision and normal color vision. Expertise levels ranged from naive viewers to HDR/imaging experts.

\paragraph{Analysis}

For each trial, the 4-way ranking yields $\binom{4}{2} = 6$ pairwise outcomes. We converted the 600 rankings into \(600\times6=3600\) pairwise outcomes and aggregated them into a \(4\times4\) pairwise preference matrix. Method scores were estimated via maximum-likelihood Bradley--Terry and converted to Just Objectionable Difference (JOD) units, where 1 JOD corresponds to a 75\% pairwise preference threshold. 95\% confidence intervals were computed via 2000-iteration bootstrap resampling over trials.

\paragraph{Results.}
Table~\ref{tab:subjective_overall} summarizes the overall ranking results. 
\textbf{LumaGuide} achieves the best subjective performance, with the lowest mean rank, highest win rate, highest Top-2 rate, lowest worst-rank rate, and highest JOD score among all methods. 
Compared with X2HDR, the strongest baseline, \textbf{LumaGuide} obtains a higher win rate and a lower worst-rank rate, indicating fewer severe failure cases and more consistently preferred HDR outputs across diverse content.

\begin{table}[t]
\centering
\small
\caption{Overall subjective ranking results. Lower mean rank and worst-rank rate are better; higher win rate, Top-2 rate, and JOD are better.}
\label{tab:subjective_overall}
\begin{tabular}{lccccc}
\toprule
Method & Mean rank \(\downarrow\) & Win rate \(\uparrow\) & Top-2 rate \(\uparrow\) & Worst rate \(\downarrow\) & JOD [95\% CI] \\
\midrule
\textbf{LumaGuide} & \textbf{1.80} & \textbf{50.2\%} & \textbf{77.7\%} & \textbf{7.7\%} & \textbf{+0.75 [+0.65, +0.85]} \\
X2HDR & 2.09 & 32.0\% & 71.7\% & 12.8\% & +0.43 [+0.34, +0.52] \\
BracketDiffusion & 2.79 & 12.5\% & 32.5\% & 24.5\% & \(-0.30\) [\(-0.39\), \(-0.21\)] \\
LEDiff & 3.32 & 5.3\% & 17.7\% & 55.2\% & \(-0.88\) [\(-0.99\), \(-0.79\)] \\
\bottomrule
\end{tabular}
\end{table}



\subsection{Additional Video Results}
\label{app:video}
\begin{figure}
    \centering
    \includegraphics[width=\linewidth]{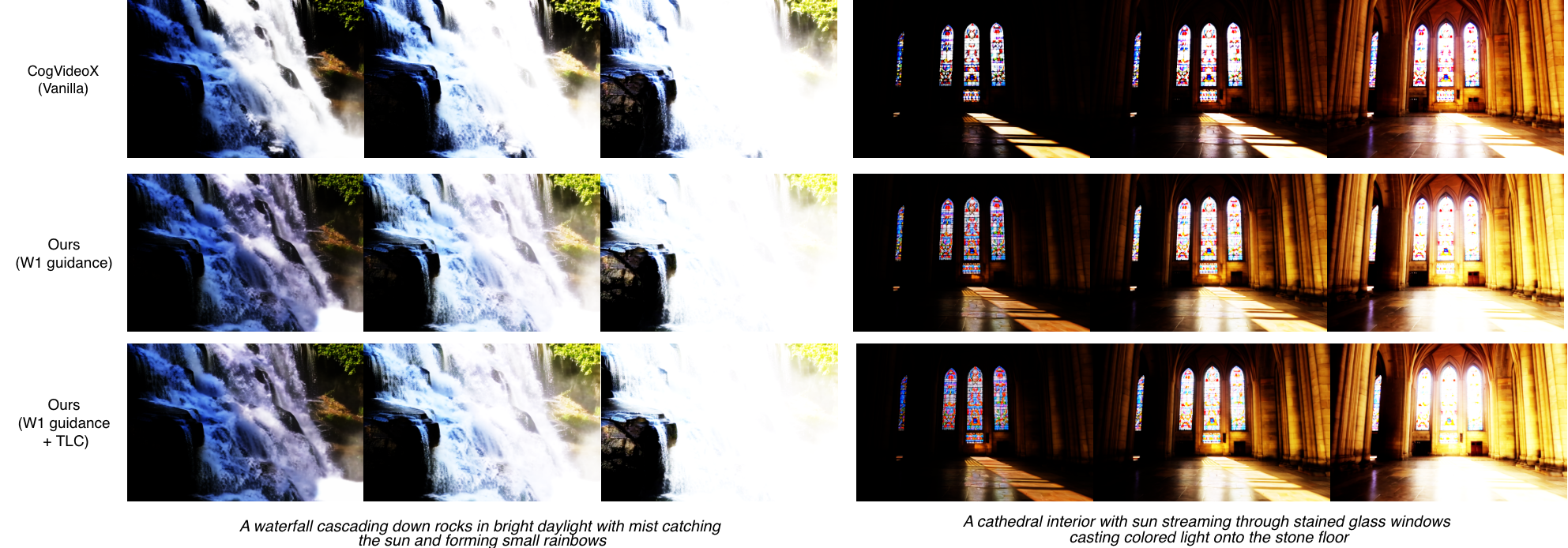}
    \caption{Qualitative video examples comparing vanilla CogVideoX, \(W_1\)-guided LumaGuide, and LumaGuide with TLC. LumaGuide improves highlight control and luminance consistency while preserving scene content.}
    \label{fig:video_qual}
\end{figure}

\begin{figure}
    \centering
    \includegraphics[width=\linewidth]{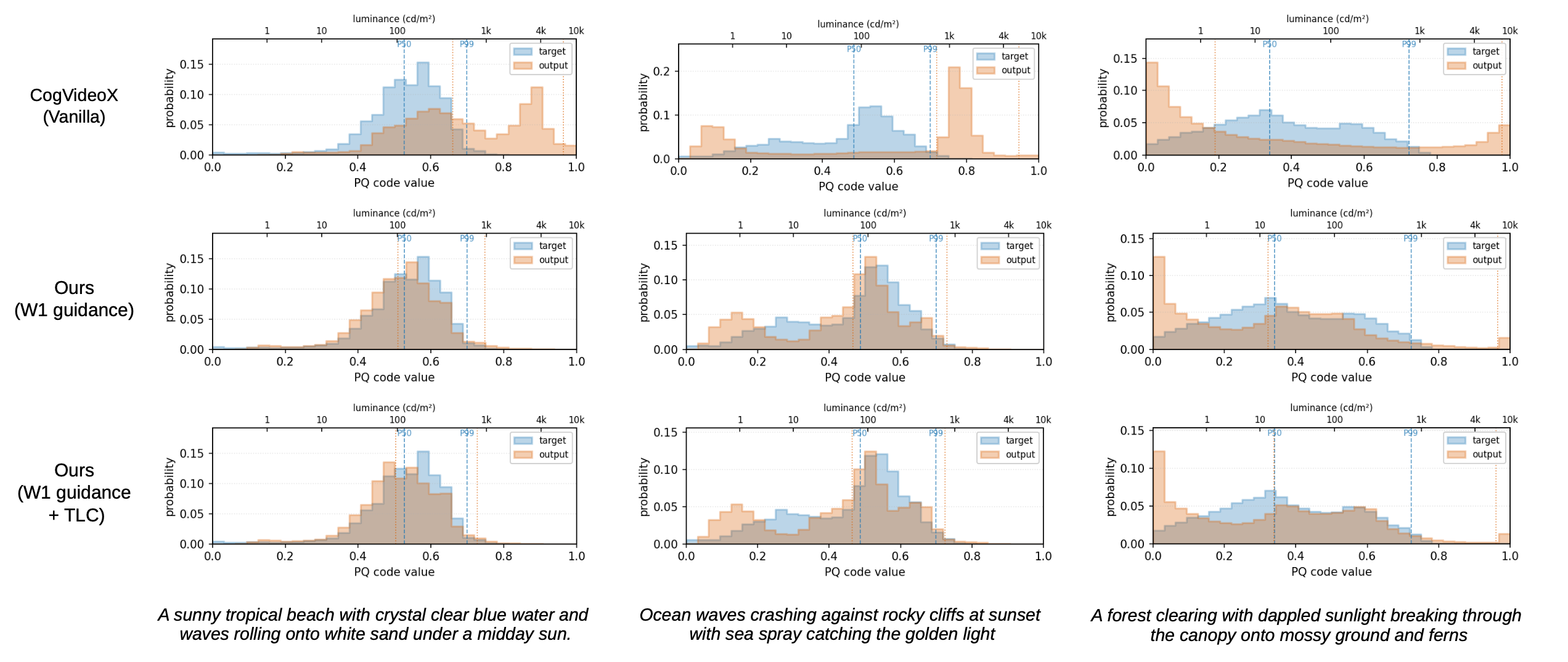}
    \caption{
    Distribution alignment of generated video under three representative prompts. 
    Top row: Vanilla CogVideoX~\cite{cogvideox} produces over-saturated PQ histograms that are poorly aligned with the target distribution (blue), with excessive mass concentrated in high-luminance regions. 
    Middle row: Our W$_1$ guidance shifts the output distributions (orange) toward the target, improving alignment across both mid-tones and high-intensity regions while preserving semantic content. 
    Bottom row: Adding temporal luminance coherence (TLC) maintains distribution alignment. 
    }
    \label{fig:video_hist}
\end{figure}

\begin{figure}
    \centering
    \includegraphics[width=\linewidth]{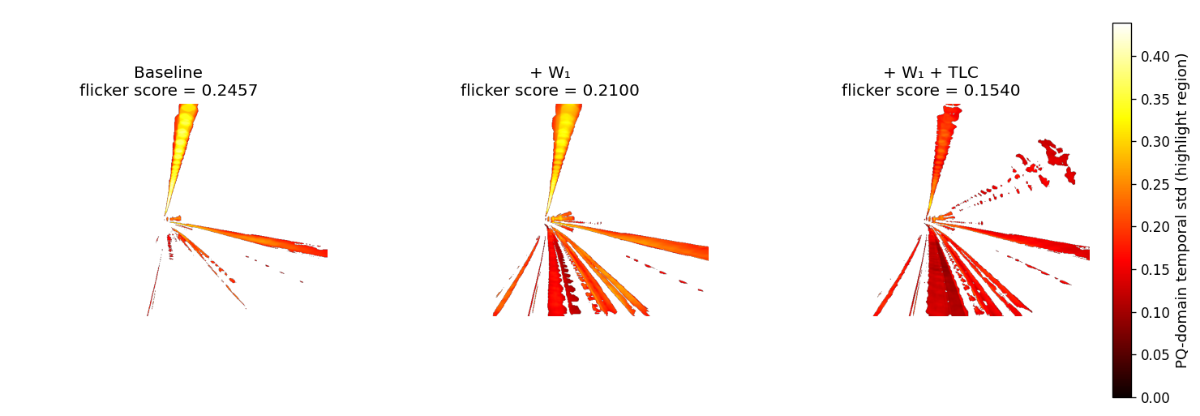}
    \caption{
    Per-pixel temporal standard deviation in highlight regions (PQ $> 0.6$) for the prompt \textit{A subway tunnel with a train approaching and bright headlights brightening the dark concrete walls}. 
    The baseline exhibits substantial flicker (yellow regions, score 0.246). 
    $W_1$ guidance alone marginally reduces flicker (0.210). 
    Adding TLC further suppresses flicker, as the temporal-variance term explicitly penalizes per-pixel variation in high-luminance regions.
    }
    \label{fig:video-flicker}
\end{figure}

We evaluate the generality of LumaGuide on video generation using CogVideoX-5B, a 5B-parameter open-source text-to-video latent diffusion model. The evaluation uses 30 prompts covering different types of scenes.

For each prompt, we generate a 33-frame video at $480 \times 720$ resolution (8 FPS, $\sim$4 seconds) with 50 diffusion steps. We compare three settings: (a) the vanilla CogVideoX baseline, (b) adding PQ-domain $W_1$ guidance ($s_0 = 3000$, constant schedule, perceptual-log formulation), and (c) adding temporal luminance coherence (TLC) on top of $W_1$ ($\lambda_f = \lambda_c = 10$). Target luminance histograms are predicted using our text-to-histogram regressor. All experiments are conducted on a single NVIDIA GH200 GPU.

Since there is no widely adopted reference-free metric for HDR video generation, we rely on two simple, ground-truth-free measures. First, uW$_1$ measures the Wasserstein-1 distance between the generated PQ luminance histogram (aggregated across frames) and the predicted target distribution. Second, we compute flicker variance as the mean temporal variance of per-pixel luminance in highlight regions (PQ $> 0.6$), which captures instability in bright areas.

Figure~\ref{fig:video_hist} shows that the baseline produces poorly aligned luminance distributions, with a large portion of mass pushed toward high PQ values (near 1.0), which corresponds to over-saturated highlights. Adding $W_1$ guidance consistently shifts the distributions toward the target across all prompts, improving both median (P50) and high-percentile (P99) alignment, in line with the drop in uW$_1$ in Table~\ref{tab:video_main}.

This effect is also visible in Figure~\ref{fig:video_qual}. The baseline tends to over-expose bright regions and produces noticeable frame-to-frame fluctuations. With $W_1$ guidance, highlight structure becomes more controlled and luminance is redistributed more evenly, while the overall scene content is preserved.

Figure~\ref{fig:video-flicker} highlights the temporal behavior. The baseline shows large high-variance regions in bright areas, indicating strong flicker. $W_1$ alone reduces this only slightly. When TLC is added, these high-variance regions are largely suppressed, leading to visibly more stable highlights across frames. This is consistent with the reduction in flicker variance from 0.025 to 0.016.

Importantly, adding TLC does not harm distribution alignment, as uW$_1$ remains essentially unchanged. Overall, $W_1$ mainly improves the spatial luminance distribution, while TLC addresses temporal instability, and the two components work well together.

\begin{table}[t]
\centering
\small
\caption{Quantitative results on CogVideoX-5B.}
\label{tab:video_main}
\begin{tabular}{lcccc}
\toprule
Method & uW$_1$ $\downarrow$ & P50 gap $\downarrow$ & P99 gap $\downarrow$ & Flicker var $\downarrow$ \\
\midrule
Vanilla CogVideoX & 6.85 $\pm$ 2.83 & 0.250 & 0.245 & 0.033 \\
+ $W_1$ guidance  & 4.03 $\pm$ 3.08 & 0.159 & 0.155 & 0.025 \\
+ $W_1$ + TLC     & \textbf{3.98} $\pm$ 3.12 & 0.160 & \textbf{0.154} & \textbf{0.016} \\
\bottomrule
\end{tabular}
\end{table}

\begin{table}[t]
\centering
\small
\caption{Pairwise improvements across prompts.}
\label{tab:video_pairwise}
\begin{tabular}{lcc}
\toprule
Comparison & Mean $\Delta$ flicker var & Median $\Delta$ \\
\midrule
baseline $\rightarrow$ + $W_1$         & 0.008 & 0.007 \\
+ $W_1$ $\rightarrow$ + $W_1$ + TLC    & 0.009 & 0.006 \\
baseline $\rightarrow$ + $W_1$ + TLC   & 0.018 & 0.015 \\
\bottomrule
\end{tabular}
\end{table}

\subsection{Analysis on Evaluation Metrics}
\label{app:metric}

\begin{figure}[t]
\centering
\includegraphics[width=\linewidth]{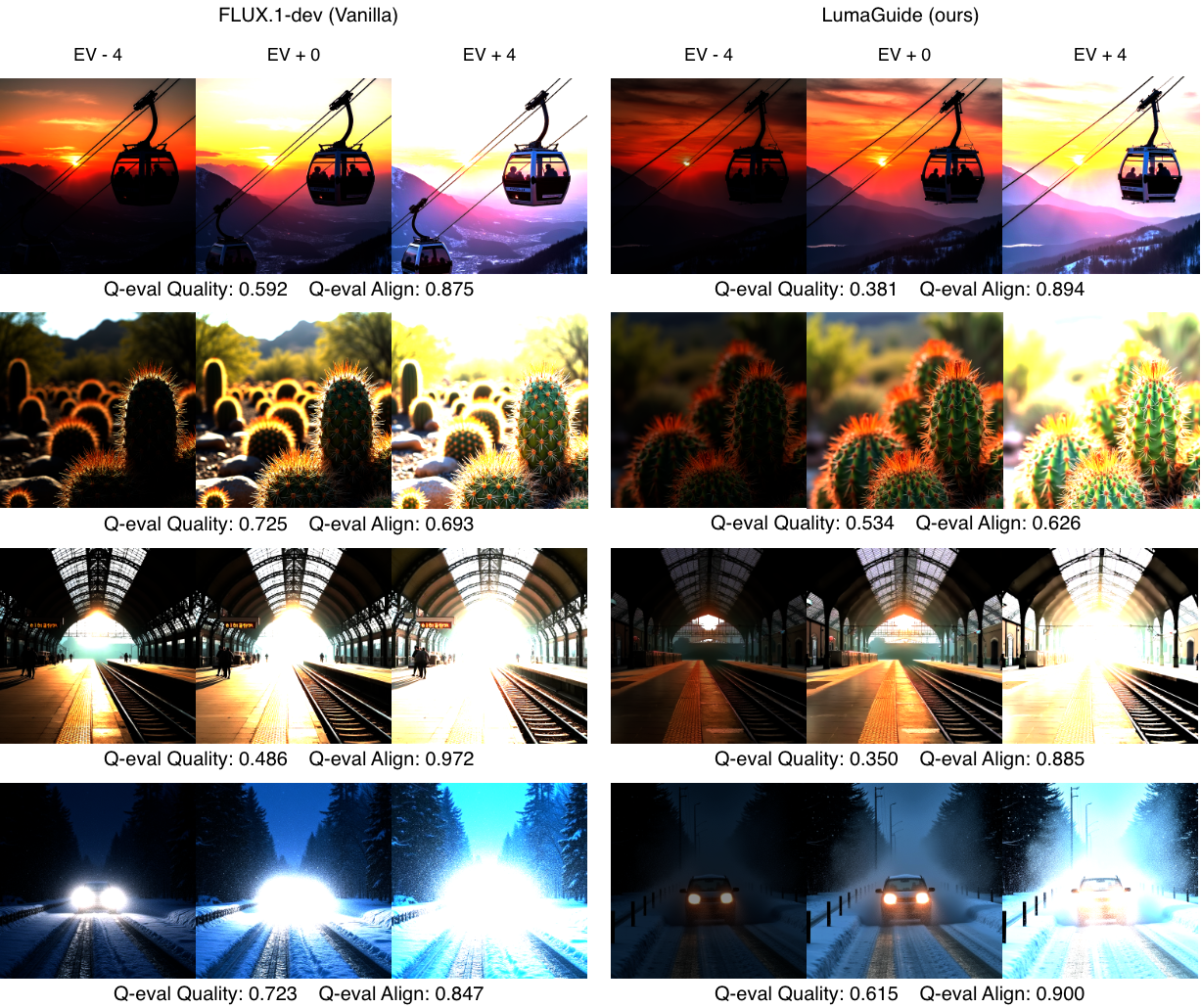}
\caption{
Illustration of the mismatch between Q-Eval quality scores and perceptual HDR quality. Images with stronger HDR luminance structure may receive lower Q-Eval quality scores despite improved distributional alignment.
}
\label{fig:metric_mismatch}
\end{figure}

We measure dynamic range following~\cite{wu2026x2hdr}. For each generated image, we first compute BT.2020 weighted luminance in linear HDR space:
\begin{equation}
Y(x, y) = 0.2627R + 0.6780G + 0.0593B,
\end{equation}
where $Y$ is measured in \(\mathrm{cd/m^2}\). To suppress isolated decoder noise that may artificially inflate luminance extremes, we apply Gaussian smoothing with $\sigma = 3$ pixels before computing statistics.

We then estimate the effective dynamic range using the 0.5th and 99.5th luminance percentiles, which robustly capture the visible luminance span while excluding numerical outliers. Following common HDR evaluation practice, the lower percentile is clamped to \(0.05~\mathrm{cd/m^2}\) to avoid unstable ratios near black.The final metric is defined as:
\begin{equation}
\mathrm{DR}_{\mathrm{stops}}
=
\log_2
\left(
\frac{
\tilde{Y}_{99.5}
}{
\max(\tilde{Y}_{0.5}, 0.05)
}
\right),
\end{equation}
where $\tilde{Y}$ denotes the Gaussian-smoothed luminance.

We also adopt the Q-Eval quality and alignment scores following prior HDR generation work. However, we observe a systematic discrepancy between Q-Eval quality scores (on HDR) and perceptual assessment in our setting.

In our framework, generated images are represented in PQ space, and evaluation is performed directly on PQ-encoded signals following prior HDR generation work. As a result, the input distribution to Q-Eval differs from its expected input regime, which is closer to SDR-like image statistics. This introduces a mismatch between the evaluation metric and the target signal domain. We observe that increasing the guidance scale generally decreases Q-Eval quality scores. At large guidance values, this behavior is expected, as overly strong guidance introduces visible artifacts such as banding and unnatural textures. However, even within the moderate guidance regime, where visual inspection indicates improved HDR characteristics and better alignment with the target luminance distribution, Q-Eval quality scores still decrease.

This suggests that Q-Eval quality is sensitive to deviations from its learned SDR prior, rather than to the correctness of HDR luminance distributions. In particular, the redistribution of luminance toward HDR distribution, which is essential for HDR generation, may be penalized as a distribution shift from SDR statistics.

In contrast, the Q-Eval alignment score remains relatively stable across guidance scales. This is consistent with its design, as it primarily measures semantic consistency between the generated image and the text prompt, and is less sensitive to luminance distribution.

These observations highlight a limitation of current evaluation metrics when applied to HDR generation. While Q-Eval provides a useful proxy for perceptual quality, it does not fully capture the correctness of HDR luminance structure. Therefore, we complement it with distribution-based metrics and qualitative analysis in our evaluation. As shown in Figure~\ref{fig:metric_mismatch}, this discrepancy is clearly visible: images with more realistic HDR behavior receive lower Q-Eval quality scores.

\subsection{Failure Cases and Limitations}

We analyze two representative failure modes of \textbf{LumaGuide} that arise from the interaction between distribution-level guidance and the underlying generative model.

\begin{figure}
    \centering
    \includegraphics[width=\linewidth]{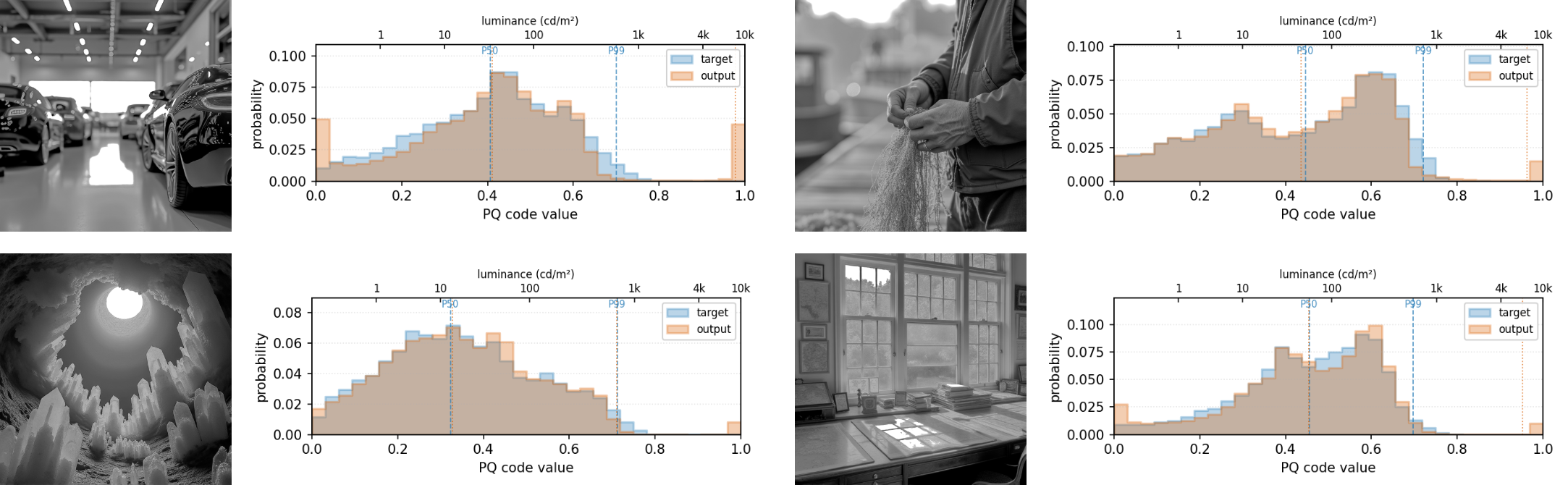}
    \caption{Failure case involving saturation-induced unguidable regions.}
    \label{fig:failure1}
\end{figure}

\begin{figure}
    \centering
    \includegraphics[width=\linewidth]{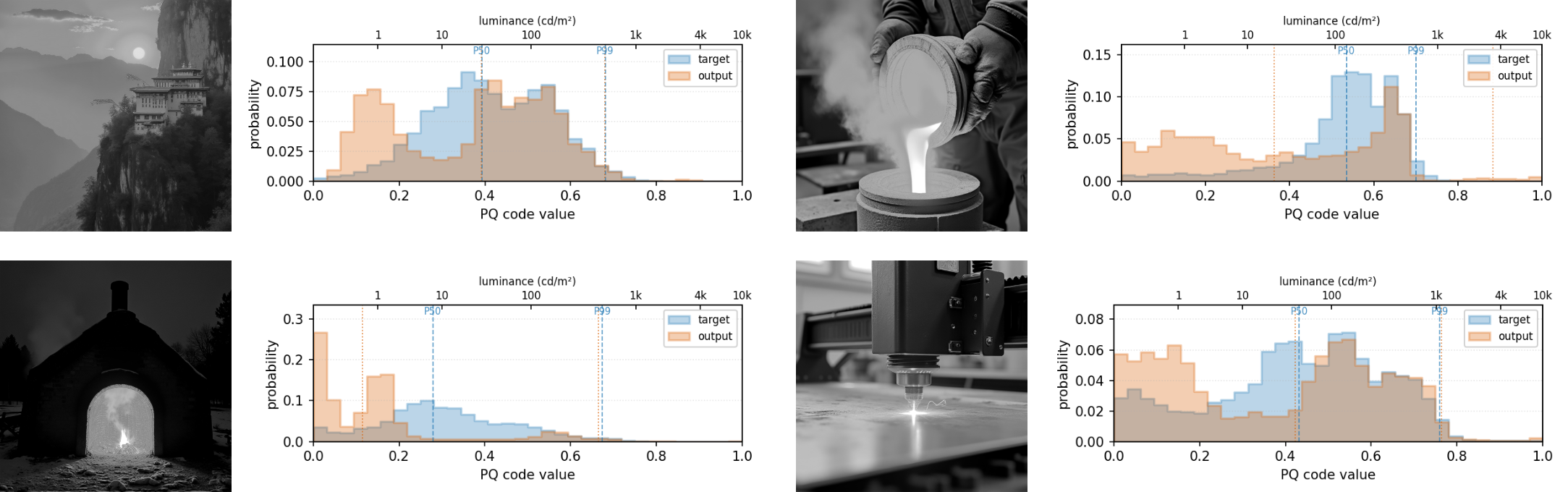}
    \caption{Failure case involving target distribution infeasibility.}
    \label{fig:failure2}
\end{figure}

\paragraph{Saturation-Induced Unguidable Regions.}
In scenes dominated by strong light sources (e.g., sky, sun, specular highlights), we observe that some pixels remain saturated at the maximum PQ value and cannot be reduced by guidance shown in Figure~\ref{fig:failure1}. Increasing the guidance strength $s_0$ often darkens surrounding regions while leaving saturated pixels unchanged, resulting in contrast exaggeration and halo artifacts. This behavior is caused by gradient vanishing at the decoder output: once pixels are clamped to the upper bound, their gradients become zero and no longer propagate back to the latent variables. As a result, the Wasserstein objective only affects neighboring unsaturated pixels, creating a luminance imbalance. This limitation prevents full alignment with the target distribution and leads to residual errors in high-percentile statistics (e.g., $p99$). Addressing this issue may require smoother decoding functions or HDR-aware architectures that preserve gradients near saturation.

\paragraph{Target Distribution Infeasibility.}
We also observe cases, shown in Figure~\ref{fig:failure2},  where the generated images cannot fully match the target histogram, particularly for prompts corresponding to low-dynamic-range scenes (e.g., indoor or low-contrast environments). In such cases, the model converges to a compromise solution that only partially aligns with the target distribution. This reflects a feasibility limitation: the pretrained diffusion model defines a manifold of plausible images, and not all luminance distributions are achievable within this space. When the target distribution is incompatible with the scene semantics, the guidance objective becomes partially infeasible, resulting in persistent distribution mismatch. This suggests that effective distribution shaping requires compatibility between the target statistics and the underlying generative prior, and motivates adaptive or content-aware target specification.

\subsection{Future Work}

Our analysis highlights several directions for improving distribution-level guidance.

First, saturation-induced failures suggest the need for HDR-aware decoding mechanisms that preserve gradients near extreme luminance values. Addressing this limitation is critical for accurately controlling high-intensity regions.

Second, the effectiveness of distribution shaping depends on the compatibility between the target distribution and the generative prior. Future work may explore adaptive or content-aware target specification to ensure feasibility during sampling.

Finally, we observe that existing evaluation metrics for HDR generation remain limited. Most perceptual metrics are designed for SDR content or are insensitive to luminance distribution, making them poorly aligned with HDR characteristics. Developing metrics that capture both perceptual quality and distributional fidelity in HDR space is an important direction for future research.

\subsection{Broader Impacts and Safeguards}

\paragraph{Broader Impacts.}
This work improves controllable HDR image and video generation through training-free luminance distribution shaping. Potential positive impacts include lowering the computational cost of HDR content creation and enabling more accessible HDR workflows for creative applications such as digital media, film, and immersive visualization.

At the same time, improved HDR realism may increase risks already associated with generative media, including deceptive or misleading synthetic content. Since HDR imagery can produce more realistic lighting and highlight behavior, generated outputs may appear more visually convincing. However, our method does not introduce a new generative backbone or new semantic generation capabilities; it operates only as a test-time guidance mechanism on existing diffusion models.

\paragraph{Safeguards.}
Our work does not release a new large-scale generative model. The proposed method is an inference-time guidance technique applied to publicly available pretrained backbones. The lightweight text-to-histogram regressor predicts only coarse luminance distributions and cannot independently generate images.

We do not release any private user metadata or identity-related supervision. The proposed method focuses solely on luminance distribution control and does not introduce mechanisms for identity imitation, biometric analysis, or targeted manipulation.

\end{document}